\icmltitlerunning{Ensemble Bootstrapping for Q-Learning}
\def\argmax{\mathop{\mathrm{argmax}}}
\DeclarePairedDelimiter\abs{\lvert}{\rvert}
\DeclarePairedDelimiter\norm{\lVert}{\rVert}
\DeclarePairedDelimiter\ceil{\lceil}{\rceil}
\newcommand{\calig}[1]{\mathcal{#1}}
\newcommand{\brac}[1]{\left(#1\right)}
\newcommand{\sbrac}[1]{\left[#1\right]}
\newcommand{\cbrac}[1]{\left\{#1\right\}}
\newcommand{\A}{\calig{A}}
\newcommand{\B}{\calig{B}}
\newcommand{\M}{\calig{M}}
\newcommand{\N}{\calig{N}}
\newcommand{\Scal}{\calig{S}}
\newcommand{\U}{\calig{U}}
\newcommand{\dsE}{\mathbb{E}}
\newcommand{\dsR}{\mathbb{R}}
\newcommand{\alg}{EBQL}
\newcommand{\algfull}{Ensemble Bootstrapped Q-Learning}
\newcommand{\SNR}{\mathrm{SNR}}
\newcommand{\var}{\mathrm{var}}
\newcommand{\bias}{\mathrm{bias}}
\newcommand{\MSE}{\mathrm{MSE}}
\newcommand{\EE}{\text{EE}}
\newcommand{\DE}{\text{DE}}
\newcommand{\SE}{\text{SE}}
\newcommand{\WDE}{\text{W-DE}}
\begin{document}
\twocolumn[
\icmltitle{Ensemble Bootstrapping for Q-Learning}

\begin{icmlauthorlist}
\icmlauthor{Oren Peer}{Tech}
\icmlauthor{Chen Tessler}{Tech}
\icmlauthor{Nadav Merlis}{Tech}
\icmlauthor{Ron Meir}{Tech}
\end{icmlauthorlist}
\icmlaffiliation{Tech}{Viterbi Faculty of Electrical Engineering, Technion Institute of Technology, Haifa, Israel}
\icmlcorrespondingauthor{Oren Peer}{\mbox{orenpeer@campus.technion.ac.il}}
%

\icmlkeywords{Machine Learning, ICML}
\vskip 0.3in
]
\printAffiliationsAndNotice{} 


\begin{abstract}
Q-learning (QL), a common reinforcement learning algorithm, suffers from over-estimation bias due to the maximization term in the optimal Bellman operator. 
This bias may lead to sub-optimal behavior. 
Double-Q-learning tackles this issue by utilizing two estimators, yet results in an under-estimation bias. 
Similar to over-estimation in Q-learning, in certain scenarios, the under-estimation bias may degrade performance.
In this work, we introduce a new bias-reduced algorithm called \algfull{} (\alg), a natural extension of Double-Q-learning to ensembles.
We analyze our method both theoretically and empirically. Theoretically, we prove that \alg-like updates yield lower MSE when estimating the maximal mean of a set of independent random variables. Empirically, we show that there exist domains where both over and under-estimation result in sub-optimal performance. 
Finally, We demonstrate the superior performance of a deep RL variant of \alg{} over other deep QL algorithms for a suite of ATARI games.
\end{abstract}

\section{Introduction} \label{Introduction}

In recent years, reinforcement learning (RL) algorithms have impacted a vast range of real-world tasks, such as robotics \citep{andrychowicz2020learning, kober2013reinforcement}, power management 
\citep{xiong2018power}, autonomous control \cite{bellemare2020autonomous}, traffic control \citep{abdulhai2003traffic1, wiering2000multi}, and more \citep{mahmud2018applications,luong2019applications2}. These achievements are possible by the ability of RL agents to learn a behavior policy via interaction with the environment while simultaneously maximizing the obtained reward.


A common family of algorithms in RL is Q-learning \citep[QL]{watkins1992q} based algorithms, which focuses on learning the value-function. 
The value represents the expected, discounted, reward-to-go that the agent will obtain. In particular, such methods learn the optimal policy via an iterative maximizing bootstrapping procedure. A well-known property in QL is that this procedure results in a positive estimation bias. \citet{hasselt2010double} and \citet{van2016deep} argue that over-estimating the real value may negatively impact the performance of the obtained policy. 
They propose Double Q-learning (DQL) and show that, as opposed to QL, it has a negative estimation bias.

The harmful effects of \emph{any} bias on value-learning are widely known.  
Notably, the deadly triad \citep{sutton2018reinforcement,van2018deep} attributes the procedure of bootstrapping with biased values as one of the major causes of divergence when learning value functions. This has been empirically shown in multiple works \citep{van2016deep, van2018deep}. Although this phenomenon is well known, most works focus on avoiding positive-estimation instead of minimizing the bias itself \citep{van2016deep,wang2016dueling,hessel2018rainbow}.

In this work, we tackle the estimation bias in an underestimation setting. We begin, similar to \citet{hasselt2010double}, by analyzing the estimation bias when observing a set of i.i.d.\ random variables (RVs). We consider three schemes for estimating the maximal mean of these RVs. \textit{(i)} The \emph{single-estimator}, corresponding to a QL-like estimation scheme. \textit{(ii)} The \emph{double-estimator}, corresponding to DQL, an estimation scheme that splits the samples into two equal-sized sets: one for determining the index of the RV with the maximal mean and the other for estimating the mean of the selected RV. 
\textit{(iii)} The \emph{ensemble-estimator}, our proposed method, splits the data into $K$ sets. Then, a single set is used for estimating the RV with the maximal-mean whereas all other sets are used for estimating the mean of the selected RV. 

We start by showing that the ensemble estimator can be seen as a double estimator that unequally splits the samples (\cref{prop:proxy}). This enables, for instance, less focus on maximal-index identification and more on mean estimation. We prove that the ensemble estimator under-estimates the maximal mean (\cref{lemma:EE underestimation}). We also show, both theoretically (\cref{prop:split_ratio}) and empirically (\cref{fig:MSE as func of r}), that in order to reduce the magnitude of the estimation error, a double estimator with equally-distributed samples is sub-optimal.

Following this theoretical analysis, we extend the ensemble estimator to the multi-step reinforcement learning case. We call this extension \algfull{} (\alg). We analyze \alg{} in a tabular meta chain MDP and on a set of ATARI environments using deep neural networks. In the tabular case, we show that as the ensemble size grows, \alg{} minimizes the magnitude of the estimation bias. 
Moreover, when coupled with deep neural networks, we observe that \alg{} obtains superior performance on a set of ATARI domains when compared to Deep Q-Networks (DQN) and Double-DQN (DDQN).


\textbf{Our contributions are as follows:}
\begin{itemize}
    \item We Analyze the problem of estimating the maximum expectation over independent random variables. We prove that the estimation mean-squared-error (MSE) can be reduced using the Ensemble Estimator. In addition, we show that obtaining the minimal MSE requires utilizing more than two ensemble members.
    \item Drawing inspiration from the above, we introduce \algfull{} (\alg) and show that it reduces the bootstrapping estimation bias.
 %
%
    \item We show that \alg{} is superior to both Q-learning and double Q-learning in both a tabular setting and when coupled with deep neural networks (ATARI).
\end{itemize}

\section{Preliminaries} \label{Preliminaries} 
\subsection{Model Free Reinforcement Learning}

A reinforcement learning \cite{sutton2018reinforcement} agent
faces a sequential decision-making problem in some unknown or partially known environment. We focus on environments in which the action space is discrete.
The agent interacts with the environment as follows. At each time $t$, the agent observes the environment state $s_t \!\in\! \Scal$ and selects an action $a_t \!\in\! \A$. Then, it receives a scalar reward $r_t \!\in\! \dsR$ and transitions to the next state $s_{t+1} \!\in\! S$ according to some (unknown) transition kernel $P(s_{t+1} | s_t, a_t)$. 
A policy $\pi:\Scal\to\A$, is a function that determines which action the agent should take at each state and the sampled performance of a policy, a random variable starting from state $s$, is denoted by
\begin{align}
    R^\pi (s) = \sum_{t=0^\infty} \gamma^t r_t | s_0 = s, a \sim \pi (s_t) \, \label{eq:reward-to-go}.
\end{align}
where $\gamma \in [0, 1)$ is the discount factor, which determines how myopic the agent should behave. 
As $R^\pi$ is a random variable, in RL we are often interested in the expected performance, also known as the policy value, 
\begin{align*}
    v^\pi (s) = \dsE^\pi \sbrac{\sum_{t=0}^\infty \gamma^{t} r(s_{t}) | s_0 = s}  \, .
\end{align*}
The goal of the agent is to learn an optimal policy $\pi^*\in\argmax_\pi v^\pi(s)$. Importantly, there exists a policy $\pi^*$ such that $v^{\pi^*}(s)=v^*(s)$ for all $s\in\Scal$ \citep{sutton2018reinforcement}. 

\subsection{Q-Learning}
One of the most prominent algorithms for learning $\pi^*$ is Q-learning \citep[QL]{watkins1992q}. Q-learning learns the Q-function, which is defined as follows
\begin{equation*}
    Q^\pi(s,a) = \dsE^\pi \sbrac{\sum_{t=0}^\infty \gamma^tr_t(s_t,a_t) |s_0=s, a_0=a} \, .
\end{equation*}
Similar to the optimal value function, the optimal Q-function is denoted by $Q^*(s,a)= \max_\pi Q^\pi (s,a), \ \forall (s,a) \!\in\! \Scal\!\times\! \A$.
Given $Q^*$, the optimal policy $\pi^*$ can obtained by using the greedy operator $\pi^*(s) = \argmax_a Q^*(s,a)\ \forall s \in \Scal$. 

QL is an off-policy algorithm that learns $Q^*$ using transition tuples $\brac{s, a, r, s'}$. Here, $s'$ is the state that is observed after performing action $a$ in state $s$ and receiving reward $r$. Specifically, to do so, QL iteratively applies the optimal Bellman equation \citep{bellman1957markovian}:
\begin{equation}\label{eqn: bellman}
    Q(s,a) \leftarrow (1-\alpha)Q(s,a) + \alpha (r_t + \gamma \max_{a'}Q(s',a')) \, .
\end{equation}
Under appropriate conditions, QL asymptotically converges almost surely to an optimal fixed-point solution \cite{tsitsiklis1994asynchronous}, i.e. $Q_t(s,a) \xrightarrow{t\rightarrow \infty} Q^*(s,a) \ \forall s\in\Scal,a\in\A$.

A known phenomenon in QL is the over-estimation of the Q-function. The Q-function is a random variable, and the optimal Bellman operator, \cref{eqn: bellman}, selects the maximal value over all actions at each state. Such estimation schemes are known to be positively biased \citep{smith2006optimizer}.

\subsection{Double Q-Learning}
A major issue with QL is that it overestimates the true Q values. \citet{hasselt2010double, van2016deep} show that, in some tasks, this overestimation can lead to slow convergence and poor performance. 
To avoid overestimation, Double Q-learning \citep[DQL]{hasselt2010double} was introduced. DQL combines two Q-estimators: $Q^A$ and $Q^B$, each updated over a unique subset of gathered experience. 
For any two estimators $A$ and $B$, the update step of estimator $A$ is defined as (full algorithm in the appendix \cref{alg:DQL}):
\begin{align}
    & \hat{a}^*_A = \argmax_{a'} Q^A(s_{t+1}, a')\nonumber\\
    & Q^A(s_t,a_t) \gets (1-\alpha_t)Q^A(s_t,a_t)
    \nonumber\\
    &\qquad\qquad\qquad +\alpha_t \brac{r_t + \gamma Q^B\left(s_{t+1},\hat{a}^*_A\right)}. \label{eq:DQL update}
\end{align}
Importantly, \citet{hasselt2010double} proved that DQL, as opposed to QL, leads to underestimation of the Q-values.

\section{Related Work} \label{related work}

\textbf{Q-learning stability.} Over the last three decades, QL has inspired a large variety of algorithms and improvements, both in the tabular setting \citep{strehl2006pac, kearns1999finite, ernst2005tree, azar2011speedy} and in the function approximation-based settings \citep{schaul2015prioritized,bellemare2017distributional,hessel2018rainbow,jin2018q,badia2020agent57}. 
Many improvements focus on the estimation bias of the learned Q-values, first identified by \citet{thrun1993issues}, and on minimizing the variance of the target approximation-error \citep{anschel2017averaged}.
Addressing this bias is an ongoing effort in other fields as well, such as economics and statistics \cite{smith2006optimizer, thaler2012winner}.
Some algorithms \citep{hasselt2010double, zhang2017weighted} tackle the QL estimator inherent bias by using double estimators, resulting in a \textit{negative} bias.

RL algorithms that use neural-network-based function approximators (deep RL) are known to be susceptible to overestimation and oftentimes suffer from poor performance in challenging environments.
Specifically, estimation bias, a member of the \textit{deadly triad} \citep{van2018deep}, is considered as one of the main reasons for the divergence of QL-based deep-RL algorithms.



In our work, we tackle the estimation bias by reducing the MSE of the next-state Q-values. While \alg{} is negatively biased, we show that it better balances the bias-variance of the estimator. We also show that the bias magnitude of \alg{} is governed by the ensemble size and reduces as the ensemble size grows.

\textbf{Ensembles.} 
One of the most exciting applications of ensembles in RL \citep{osband2018randomized, lee2020sunrise} is to improve exploration and data collection. These methods can be seen as a natural extension of Thompson sampling-like methods to deep RL. 
The focus of this work complements their achievements. While they consider how an ensemble of estimators can improve the learning process, we focus on how to better train the estimators themselves.

%
\section{Estimating the Maximum Expected Value}
As mentioned in \cref{Preliminaries}, QL and DQL display opposing behaviors of over- and under-estimation of the Q-values. To better understand these issues, and our contribution, we take a step back from RL and statistically analyze the different estimators on i.i.d.\ samples.

Consider the problem of estimating the \textit{maximal expected value} of $m$ independent random variables $\brac{X_1, \dots ,X_m}\triangleq X$, with means  $\mu \!=\! \brac{\mu_1, \dots ,\mu_m}$ and standard deviations $\sigma \!=\! \brac{\sigma_1, \dots ,\sigma_m}$. 
Namely, we are interested in estimating
\begin{align} \label{eq:max_exp_X}
    \max_a \dsE \sbrac{X_a} = \max_a \mu_a \triangleq \mu^*\enspace.
\end{align}
The estimation is based on i.i.d.\ samples from the same distribution as $X$. Formally, let $S = \cbrac{S_a}_{a=1}^m$ be a set of samples, where $S_a=\cbrac{S_a(n)}_{n=1}^{N}$ is a subset of $N$ i.i.d.\ samples from the same distribution as $X_a$. Specifically, this implies that $\dsE\sbrac{S_a(n)} = \mu_a$ for all $n\in \sbrac{N_a}$. 
We further assume all sets are mutually independent.

Denote the empirical means of a set $S_a$ by $\hat{\mu}_a(S_a) = 
\frac{1}{N_a}\sum_{n \in [N_a]} S_a(n)$.
This is an unbiased estimator of $\mu_a$, and given sufficiently many samples, it is reasonable to approximate $\hat{\mu}_a(S_a) \approx \mu_a$. Then, a straightforward method to estimate \eqref{eq:max_exp_X} is to use the \textit{Single-Estimator} (SE): 
\begin{align*}
\hat{\mu}_{\SE}^* \triangleq \max_a \hat{\mu}_a(S_a) \approx \max_a \dsE \sbrac{\hat{\mu}_a(S_a)} = \mu^*~.
\end{align*}
As shown in \cite{smith2006optimizer} (and rederived in \cref{proof:se_unbiased} for completeness), this estimator is positively biased.
This overestimation is believed to negatively impact the performance of SE-based algorithms, such as QL. 

To mitigate this effect, \citep{hasselt2010double} introduced the \textit{Double Estimator} (DE). In DE, the samples of each random variable $a\in[m]$ are split into two disjoint, equal-sized subsets $S_a^{(1)}$ and $S_a^{(2)}$, such that $S_a^{(1)} \cup S_a^{(2)} = S_a$ and $S_a^{(1)} \cap S_a^{(2)} = \emptyset$ for all $a\in\sbrac{m}$. For brevity, we denote the empirical mean of the samples in $S_a^{(j)}$ by $\hat{\mu}^{(j)}_a \triangleq\hat{\mu}_a(S^{(j)}_a)$. Then, DE uses a two-phase estimation process:
In the first phase, the \textbf{index} of the variable with the maximal expectation is estimated using the empirical means of $S^{(1)}$, $\hat{a}^* = \argmax_a \hat{\mu}^{(1)}_a$. 
In the second phase, the \textbf{mean} of $X_{\hat{a}^*}$ is estimated using $S^{(2)}_{\hat{a}^*}$, $\hat{\mu}^*_{\DE} = \hat{\mu}^{(2)}_{\hat{a}^*}$. 
The resulting estimator is negatively biased, i.e., $\dsE \sbrac{\hat{\mu}^*_{\DE}} \leq \mu^*$ \citep{hasselt2010double}.

\subsection{The Ensemble Estimator}\label{subsec: ensemble estimator}
In this section, we introduce the \textit{Ensemble Estimator} (EE), a natural generalization of DE to $K$ estimators. 
We take another step forward and ask: 
\begin{center}
    \textit{How can we benefit by using $K$ estimators rather than 2?}
\end{center}

In the DE, two sources are affecting the approximation error. One type of error rises when the wrong index $\hat{a}^*$ is selected (inability to identify the maximal index). The second, when the mean is incorrectly estimated.
While increasing the number of samples used to identify the index reduces the chance of misidentification, it results with fewer samples for mean estimation. Notably, the DE na\"ively allocates the same number of samples to both stages. As we show, the optimal ``split" is not necessarily equal, hence, in an attempt to minimize the total MSE, a good estimator should more carefully tune the number of samples allocated to each stage.

Consider, for example, the case of two independent random variables $(X_1,X_2) \sim \brac{\N(\mu_1, \sigma^2), \N(\mu_2, \sigma^2)}$. 
When the means $\mu_1$ and $\mu_2$ are dissimilar, i.e., $\abs{\mu_1-\mu_2}/\sigma \gg 1$, the chance of index misidentification is low, and thus, it is preferable to allocate more samples to the task of mean estimation.
On the other extreme, as $\abs{\mu_1-\mu_2}/\sigma \rightarrow 0$, the two distributions are effectively identical, 
and the task of index identification becomes irrelevant. Thus, most of the samples should be utilized for mean estimation.
Interestingly, in both these extremities, it is beneficial to allocate more samples to mean estimation over index identification.

As argued above, minimizing the MSE of the estimator can be achieved by controlling the relative number of samples in each estimation phase. We will now formally define the ensemble estimator and prove that it is equivalent to changing the relative allocation of samples between index-estimation and mean-estimation (\cref{prop:proxy}). Finally, we demonstrate both theoretically and numerically that EE achieves lower MSE than DE, and by doing so, better balances the two sources of errors. 

Ensemble estimation is defined as follows:
While the DE divides the samples into two sets, the EE divides the samples of each random variable $a\in[m]$ into $K$ equal-sized disjoint subsets, such that $S_a = \bigcup_{k=1}^K S_a^{(k)}$ and $S_a^{(k)}\! \cap S_a^{(l)} = \emptyset$, $\forall k\neq l \in \sbrac{K}$. 
We further denote the empirical mean of the $a^{th}$ component of $X$, based on the $k^{th}$ set, by $\hat{\mu}_a^{(k)}$. 
As in double-estimation, EE uses a two-phase procedure to estimate the maximal mean.
First, a single arbitrary set $\tilde{k}\in[K]$ is used to estimate the \textbf{index} of the maximal expectation, 
$\hat{a}^* = \argmax_a \hat{\mu}_a^{(\tilde{k})}$. 
Then, EE utilizes the remaining ensemble members to jointly estimate
$\mu_{\hat{a}^*}$, namely,
\begin{align*}
    \hat{\mu}_{\EE}^* &\triangleq
    \frac{1}{K-1}\sum_{j \in [K] \backslash \tilde{k}}\hat{\mu}_{\hat{a}^*}^{(j)} \\
    &\approx \max_a \dsE \sbrac{\frac{1}{K-1}\sum_{j \in [K] \backslash \tilde{k}} \hat{\mu}_a^{(j)}}
    =\mu^*~.
\end{align*}
\cref{lemma:EE underestimation} shows that by using EE, we still maintain the underestimation nature of DE.
\begin{restatable}{lemma}{SEbias}
\label{lemma:EE underestimation}
Let $\M =\argmax_a \dsE \sbrac{{X_a}}$ 
be the set of indices where $\dsE[X_a]$ is maximized. 
Let $\hat{a}^*\in\argmax_a \hat{\mu}^{\brac{\tilde{k}}}_a$ be the estimated maximal-expectation index according to the $\tilde{k}^{th}$ subset. 
Then 
$\dsE \sbrac{\hat{\mu}_{\EE}^*}  = \dsE \sbrac{\mu_{\hat{a}^*}} \leq \max_a \dsE \sbrac{X_a}$. 
Moreover, the inequality is strict if and only if $P\brac{\hat{a}^* \notin \M} > 0$.
\end{restatable}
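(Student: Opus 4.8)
The plan is to decouple the two estimation phases using the mutual independence of the $K$ subsets, and then reduce the statement to an elementary fact about non-negative random variables. First I would establish the equality $\dsE\sbrac{\hat{\mu}_{\EE}^*} = \dsE\sbrac{\mu_{\hat{a}^*}}$ by conditioning on the estimated index $\hat{a}^*$. This index depends only on the index-selection sets $\cbrac{S_a^{\brac{\tilde{k}}}}_a$, whereas each averaging estimator $\hat{\mu}_a^{\brac{j}}$ with $j \neq \tilde{k}$ depends only on $S_a^{\brac{j}}$; since all subsets are mutually independent, $\hat{a}^*$ is independent of the mean-estimation samples. By the tower property, conditioning on $\hat{a}^*$ freezes it to a fixed index inside the inner expectation, so that $\dsE\sbrac{\frac{1}{K-1}\sum_{j \neq \tilde{k}} \hat{\mu}_{\hat{a}^*}^{\brac{j}} \mid \hat{a}^*} = \mu_{\hat{a}^*}$, using the unbiasedness of each $\hat{\mu}_a^{\brac{j}}$ together with its independence of the selected index. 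Taking the outer expectation yields the claimed equality.

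Next, the inequality $\dsE\sbrac{\mu_{\hat{a}^*}} \leq \mu^*$ follows immediately: since $\mu^* = \max_a \mu_a$, we have $\mu_{\hat{a}^*} \leq \mu^*$ for every realization of $\hat{a}^*$, and monotonicity of expectation gives the bound. For strictness, I would write the gap as $\mu^* - \dsE\sbrac{\mu_{\hat{a}^*}} = \dsE\sbrac{\mu^* - \mu_{\hat{a}^*}} = \sum_{a \notin \M}\brac{\mu^* - \mu_a}\,P\brac{\hat{a}^* = a}$, in which every coefficient satisfies $\mu^* - \mu_a > 0$. This expresses the elementary fact that the non-negative random variable $\mu^* - \mu_{\hat{a}^*}$ has zero expectation if and only if it vanishes almost surely; since it vanishes exactly on the event $\cbrac{\hat{a}^* \in \M}$, the gap is zero iff $P\brac{\hat{a}^* \notin \M} = 0$ and strictly positive otherwise.

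I expect the only delicate point to be the conditioning step: one must verify that $\hat{a}^*$ is measurable with respect to the index-selection sets alone (including whatever rule resolves ties in the $\argmax$), so that it is genuinely independent of the mean-estimation subsets and may be treated as a constant inside the inner conditional expectation. Once this independence is stated cleanly, the remainder is routine — the unbiasedness of each $\hat{\mu}_a^{\brac{j}}$ and the non-negativity argument do all the work, and no assumptions beyond finiteness of the means are needed.
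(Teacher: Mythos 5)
Your proposal is correct and follows essentially the same route as the paper's proof: the equality via the tower property together with the mutual independence and unbiasedness of the ensemble members, and the inequality via the pointwise bound $\mu_{\hat{a}^*} \leq \mu^*$ with strictness governed by whether $P\brac{\hat{a}^*\notin\M}>0$. Your phrasing of strictness as the vanishing of the non-negative gap $\sum_{a\notin\M}(\mu^*-\mu_a)P(\hat{a}^*=a)$ is a slightly more compact packaging of the paper's two-case argument, and your remark about tie-breaking measurability of $\hat{a}^*$ is a sound (if implicit in the paper) point, but the substance is identical.
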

The proof can be found in \cref{proof:lemma1}. 
In addition, below, we show that changing the size of the ensemble is equivalent to controlling the partition of data between the two estimation phases. As previously mentioned, the split-ratio greatly affects the MSE, and therefore, this property enables the analysis of the effect of the ensemble size $K$ on the MSE.

\begin{restatable}[The Proxy Identity]{proposition}{proxyIdentity}\label{prop:proxy}
 Let $S^{(1)},\dots,S^{(K)}$ be some subsets of the samples of equal sizes $N/K$ and let $\tilde{k}\in\sbrac{K}$ be an arbitrary index. Also, denote by $\hat{\mu}_{\EE}^*$, the EE that uses the $\tilde{k}^{th}$ subset for its index-estimation phase. 
Finally, let $\hat{\mu}_{\WDE}^*$ be a DE that uses $S^{(\tilde{k})}$ for its index-selection and all other samples for mean-estimation; i.e., $\hat{a}^* = \argmax_a \hat{\mu}^{(\tilde{k})}_a$ and $\hat{\mu}_{\WDE}^*=\hat{\mu}_{\hat{a}^*}\brac{\bigcup_{j \in [K]\backslash \tilde{k}} S^{(j)}}$. Then, $\hat{\mu}_{\EE}^* = \hat{\mu}_{\WDE}^*$.
\end{restatable}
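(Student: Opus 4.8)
The plan is to observe that the two estimators $\hat{\mu}_{\EE}^*$ and $\hat{\mu}_{\WDE}^*$ are assembled from the very same ingredients and differ only in how the mean-estimation phase aggregates the data. First I would note that both estimators execute an \emph{identical} index-selection step: each uses the single subset indexed by $\tilde{k}$ and sets $\hat{a}^* = \argmax_a \hat{\mu}_a^{(\tilde{k})}$. Consequently, the selected index $\hat{a}^*$ is the same sample-dependent quantity in both constructions, so it suffices to show that, conditioned on any fixed value of $\hat{a}^*$, the two mean-estimation phases return the same number. This reduces the proposition to a statement that no longer involves the $\argmax$ or the randomness of index selection at all.

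The core of the argument is then a purely algebraic identity: for equal-sized disjoint subsets, the uniform average of the per-subset empirical means equals the single empirical mean taken over their union. Concretely, I would expand each term using that every subset has exactly $N/K$ samples, writing $\hat{\mu}_{\hat{a}^*}^{(j)} = \frac{K}{N}\sum_{x \in S_{\hat{a}^*}^{(j)}} x$. Summing over $j \in [K]\backslash\tilde{k}$ and dividing by $K-1$ yields $\frac{K}{N(K-1)}$ times the total sum of all sample values lying in $\bigcup_{j \in [K]\backslash\tilde{k}} S_{\hat{a}^*}^{(j)}$. On the other side, this union contains exactly $(K-1)\cdot(N/K)$ samples, so by definition $\hat{\mu}_{\WDE}^*$ equals that same total sum divided by $(K-1)N/K$, i.e. multiplied by the identical factor $\frac{K}{N(K-1)}$. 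The two expressions coincide, which establishes $\hat{\mu}_{\EE}^* = \hat{\mu}_{\WDE}^*$.

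There is no genuine obstacle here, since the statement is an exact identity rather than an inequality or an approximation; the one point I would take care to emphasize is that the equal-size assumption ($|S_a^{(j)}| = N/K$ for all $j$) is \emph{essential}. If the subsets had unequal sizes, a uniform average of their empirical means would in general differ from the pooled mean over the union, and the equivalence would break down. This is exactly why the proposition is phrased for equal-sized subsets, and it is also what makes the ensemble size $K$ a clean proxy for the index/mean split ratio in the subsequent MSE analysis.
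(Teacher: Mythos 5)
Your proposal is correct and follows essentially the same route as the paper's proof: first noting that the index-selection step is identical in both estimators, then verifying the pathwise algebraic identity that the uniform average of the $K-1$ per-subset empirical means equals the pooled empirical mean over their union, using the equal-size assumption $|S_a^{(j)}|=N/K$. Your closing remark that equal subset sizes are essential (otherwise the average of means differs from the pooled mean) is a valid observation that the paper leaves implicit.
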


We call the weighted version of the DE that uses $1/K$ of its samples to the index estimation W-DE. The proxy identity establishes that W-DE is equivalent to the ensemble estimator. 
Specifically, let $N_1$ be the number of samples used in the first phase of W-DE (index estimation) and say that the MSE of the W-DE is minimized at $N_1 = N_1^*$. Then, by the proxy-identity, the optimal ensemble size of the EE is $K \approx N/N_1^*$. Thus, the optimal split-ratio of W-DE serves as a \textit{proxy} to the number of estimators to be used in EE. 


To better understand the optimal split-ratio, we analyze the MSE of the EE. To this end, we utilize the proxy identity and instead calculate the MSE of the W-DE with $N_1\approx N/K$ samples for index-estimation. Let $X$ be a vector with independent component of means $\brac{\mu_1, \dots, \mu_m}$ and variances $\brac{\sigma_1^2, \dots, \sigma_m^2}$. Assume w.l.o.g.\ that the means are sorted such that $\mu_1 \ge \mu_2 \ge \dots \ge \mu_m$. Then, the following holds (see derivations in Appendix \ref{Ensemble estimator MSE}):
    \begin{align*}
    &\bias(\hat{\mu}^*_{\WDE})=
    \sum_{a=1}^m \brac{\mu_a-\mu_1} P(\hat{a}^*=a), \\
    &\var(\hat{\mu}^*_{\WDE}) 
    =\sum_{a=1}^m \brac{ \frac{\sigma_a^2}{N-N_1}+ \mu_a^2 } P(\hat{a}^*=a) \\
    &\qquad\qquad\qquad- \brac{\sum_{a=1}^m \mu_a P(\hat{a}^*=a)}^2.
    \end{align*}
    
    As $\mu_1$ is assumed to be largest, the bias is always negative; hence, EE underestimates the maximal mean.
    Furthermore, we derive a closed form for the MSE:
    \begin{align} \label{eq:mse}
    \MSE&\brac{\hat{\mu}^*_{\WDE}}  \nonumber\\
    &= \sum_{a=1}^m\brac{\frac{\sigma_a^2}{N-N_1} + (\mu_1-\mu_a)^2}P\brac{\hat{a}^*=a}.
    \end{align}
We hypothesize that in most cases, the MSE is minimized for $K>2$.
As \eqref{eq:mse} is rather cumbersome to analyze, we prove this hypothesis in a simpler case and demonstrate it numerically for more challenging situations.
\begin{restatable}{proposition}{propSplitRatio}\label{prop:split_ratio}
    Let 
    $X=(X_1,X_2)\sim \N\brac{(\mu_1,\mu_2)^T,\sigma^2I_2}$ be a Gaussian random vector
    such that $\mu_1\ge\mu_2$ 
    and let $\Delta=\mu_1-\mu_2$. Also, define the signal to noise ratio as $\SNR=\frac{\Delta}{\sigma/\sqrt{N}}$ and let $\hat{\mu}^*_{\WDE}$ be a W-DE that uses $N_1$ samples for index estimation. Then, for any fixed even sample-size $N>10$ and any $N_1^*$ that minimizes $\MSE(\hat{\mu}^*_{\WDE})$, it holds that
    \begin{enumerate*}
        \item[(1)] \label{large_snr}As $\SNR \to \infty$, $N_1^*\to 1$
        \item[(2)] \label{small_snr}As $\SNR \to 0$, $N_1^*\to 1$ 
        \item[(3)] \label{any_case_EE_better}For any $\sigma$ and $\Delta$, it holds that $N_1^*<N/2$.
    \end{enumerate*}
\end{restatable}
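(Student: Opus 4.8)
The plan is to collapse the abstract error expression in \eqref{eq:mse} into an explicit one-dimensional function of $N_1$ and then minimize it by elementary calculus. Specializing \eqref{eq:mse} to $m=2$ and $\sigma_1=\sigma_2=\sigma$, the two $\sigma^2/(N-N_1)$ contributions merge (since $P(\hat a^*=1)+P(\hat a^*=2)=1$) and only the $a=2$ term carries the $\Delta^2=(\mu_1-\mu_2)^2$ penalty, giving $\MSE=\frac{\sigma^2}{N-N_1}+\Delta^2 P(\hat a^*=2)$. Because the index phase uses $N_1$ i.i.d.\ Gaussian samples per variable, $\hat\mu_1^{(\tilde k)}-\hat\mu_2^{(\tilde k)}\sim\N(\Delta,2\sigma^2/N_1)$, so the misidentification probability has the closed form $P(\hat a^*=2)=\Phi(-\Delta\sqrt{N_1}/(\sqrt2\,\sigma))$, with $\Phi$ the standard normal CDF. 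The whole problem thus reduces to minimizing
\[ f(N_1)=\frac{\sigma^2}{N-N_1}+\Delta^2\,\Phi\Big(-\frac{\Delta\sqrt{N_1}}{\sqrt2\,\sigma}\Big) \]
over the integers $N_1\in\{1,\dots,N-1\}$, which I will analyze through its smooth extension to $(0,N)$.

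The key structural fact is that $f$ is strictly convex, which reduces every claim to checking the sign of $f'$ at a single point. Differentiating gives
\[ f'(N_1)=\frac{\sigma^2}{(N-N_1)^2}-\frac{\Delta^3}{2\sqrt2\,\sigma\sqrt{2\pi}}\,\frac{1}{\sqrt{N_1}}\exp\Big(-\frac{\Delta^2 N_1}{4\sigma^2}\Big) . \]
The first term is increasing in $N_1$, while the subtracted term is a positive constant times $N_1^{-1/2}\exp(-\Delta^2 N_1/(4\sigma^2))$, a product of two decreasing factors and hence decreasing; subtracting a decreasing quantity is increasing, so $f'$ is strictly increasing and $f$ is convex, with a unique continuous minimizer $\bar N_1$ pinned down by the sign of $f'$.

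Parts (1) and (2) follow by showing the minimizer hits the left endpoint, i.e.\ $f'(1)>0$. Rewriting with $\SNR=\Delta\sqrt N/\sigma$ at fixed $N$, one gets $f'(1)=\sigma^2\big(\frac{1}{(N-1)^2}-c_N\,\SNR^3 e^{-\SNR^2/(4N)}\big)$ for a constant $c_N>0$; the negative term has the form $\SNR^3 e^{-\text{const}\cdot\SNR^2}$, which tends to $0$ both as $\SNR\to0$ and as $\SNR\to\infty$, leaving $f'(1)\to\sigma^2/(N-1)^2>0$. Convexity then makes $f$ increasing on $[1,N-1]$, so $N_1^*=1$ in both limits. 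For part (3) I evaluate $f'$ near $N/2$: a short computation yields $f'(N/2)=\frac{\sigma^2}{N^2}\big(4-\frac{1}{2\sqrt{2\pi}}\SNR^3 e^{-\SNR^2/8}\big)$, and since $\sup_{s>0}s^3 e^{-s^2/8}=24\sqrt3\,e^{-3/2}\approx9.3$ is attained at $s=2\sqrt3$ and lies below the threshold $8\sqrt{2\pi}\approx20$ required for positivity, the bracket is positive for every $\SNR$; hence $f'(N/2)>0$ and, by convexity, $\bar N_1<N/2$.

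The main obstacle is the integer constraint: $\bar N_1<N/2$ by itself does not exclude the integer $N/2$ from being optimal when $\bar N_1\in(N/2-\tfrac12,N/2)$. I close this by strengthening the estimate to $f'((N-1)/2)>0$, i.e.\ $\bar N_1\le N/2-\tfrac12$, which forces $N_1^*\le N/2-1<N/2$. Re-running the uniform-in-$\SNR$ bound at $N_1=\beta N$ with $\beta=(N-1)/(2N)$ gives a bracket $\frac{1}{(1-\beta)^2}-\frac{\SNR^3}{2\sqrt2\sqrt\beta\sqrt{2\pi}}e^{-\SNR^2\beta/4}$ whose maximum over $\SNR$ scales like $\beta^{-2}$; the hypothesis $N>10$ keeps $\beta\ge 11/24$, close enough to $1/2$ that the bracket stays strictly positive, and this is exactly where $N>10$ enters. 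Establishing this last uniform-over-$\SNR$ inequality with the correct constants is the only genuinely delicate computation; everything else is monotonicity bookkeeping made routine by convexity.
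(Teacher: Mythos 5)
Your route is essentially the paper's, upgraded by one nice structural observation. You derive the same closed form $\MSE(N_1)=\frac{\sigma^2}{N-N_1}+\Delta^2\bigl(1-\Phi\bigl(\Delta\sqrt{N_1}/(\sqrt{2}\sigma)\bigr)\bigr)$ and the same derivative, and your computations check out: $f'(1)\to\sigma^2/(N-1)^2>0$ in both $\SNR$ limits, and $f'(N/2)=\frac{\sigma^2}{N^2}\bigl(4-\frac{1}{2\sqrt{2\pi}}\SNR^3e^{-\SNR^2/8}\bigr)>0$ uniformly since $\sup_{s>0}s^3e^{-s^2/8}=24\sqrt{3}\,e^{-3/2}\approx 9.3<8\sqrt{2\pi}\approx 20$. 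The strict convexity of $f$ (increasing term minus a term that is a positive constant times the decreasing function $N_1^{-1/2}e^{-\Delta^2N_1/(4\sigma^2)}$) is correct and is a genuine simplification the paper does not exploit: it unifies parts (1) and (2) into a single sign check at $N_1=1$, whereas the paper uses two separate bounds (dropping the exponential for one limit, bounding $x^{3/2}e^{-x}<e^{-x/2}$ for the other), and it replaces the paper's interval argument by pointwise derivative evaluations. Note also that your uniform-in-$\SNR$ bracket at $N_1=\beta N$ is positive exactly when $\beta>\sqrt{d}/(1+\sqrt{d})\approx 0.405$ with $d=6^{3/2}e^{-3/2}/(4\sqrt{\pi})$, so your machinery reproduces precisely the paper's threshold $\tilde{N}_1\approx 0.405N$.

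There is, however, one genuine gap, and it sits exactly in the step you flagged as delicate. From $f'\bigl((N-1)/2\bigr)>0$, i.e.\ $\bar{N}_1<N/2-\frac{1}{2}$, you conclude $N_1^*\le N/2-1$. That is round-to-nearest reasoning, valid for symmetric convex functions but false for general strictly convex $f$: if $\bar{N}_1\in(N/2-1,\,N/2-\frac{1}{2})$ and $f$ rises steeply to the left of $\bar{N}_1$ and gently to its right, then $f(N/2)<f(N/2-1)$ and the discrete minimizer is $N/2$. Knowing the sign of $f'$ at the midpoint $(N-1)/2$ gives (via convexity) only lower bounds on $f(N/2-1)$, never an upper bound relative to $f(N/2)$, so the implication does not follow. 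The fix is cheap and uses only tools you already deployed: run the same uniform-in-$\SNR$ bound at the \emph{integer} point $N_1=N/2-1$, i.e.\ $\beta=\frac{1}{2}-\frac{1}{N}\ge\frac{5}{12}\approx 0.417>0.405$ for even $N\ge 12$ (this is exactly where the hypothesis $N>10$ enters). Then $f'(N/2-1)>0$, and since $f'$ is increasing, $f$ is strictly increasing on $[N/2-1,\,N)$, giving $f(N/2-1)<f(N/2)\le f(n)$ for every integer $n\ge N/2$, which forces $N_1^*<N/2$. This repaired step is in substance the paper's own argument: the paper compares the integer $\lceil\tilde{N}_1\rceil\le N/2-1$ against $N/2$ using strict monotonicity of the MSE on $(\tilde{N}_1,N)$, rather than reasoning about how far the continuous minimizer lies from $N/2$.
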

The proof can be found in \cref{proof:prop2}. Note that a similar analysis can be done for sub-Gaussian variables, using standard concentration bounds instead of using $\Phi$. However, in this case, we have to bound $P(\hat{a}^*=a)$ and can only analyze an upper bound of the MSE.

\begin{figure}[t]
\begin{center}
\centerline{\includegraphics[width=0.9\columnwidth]{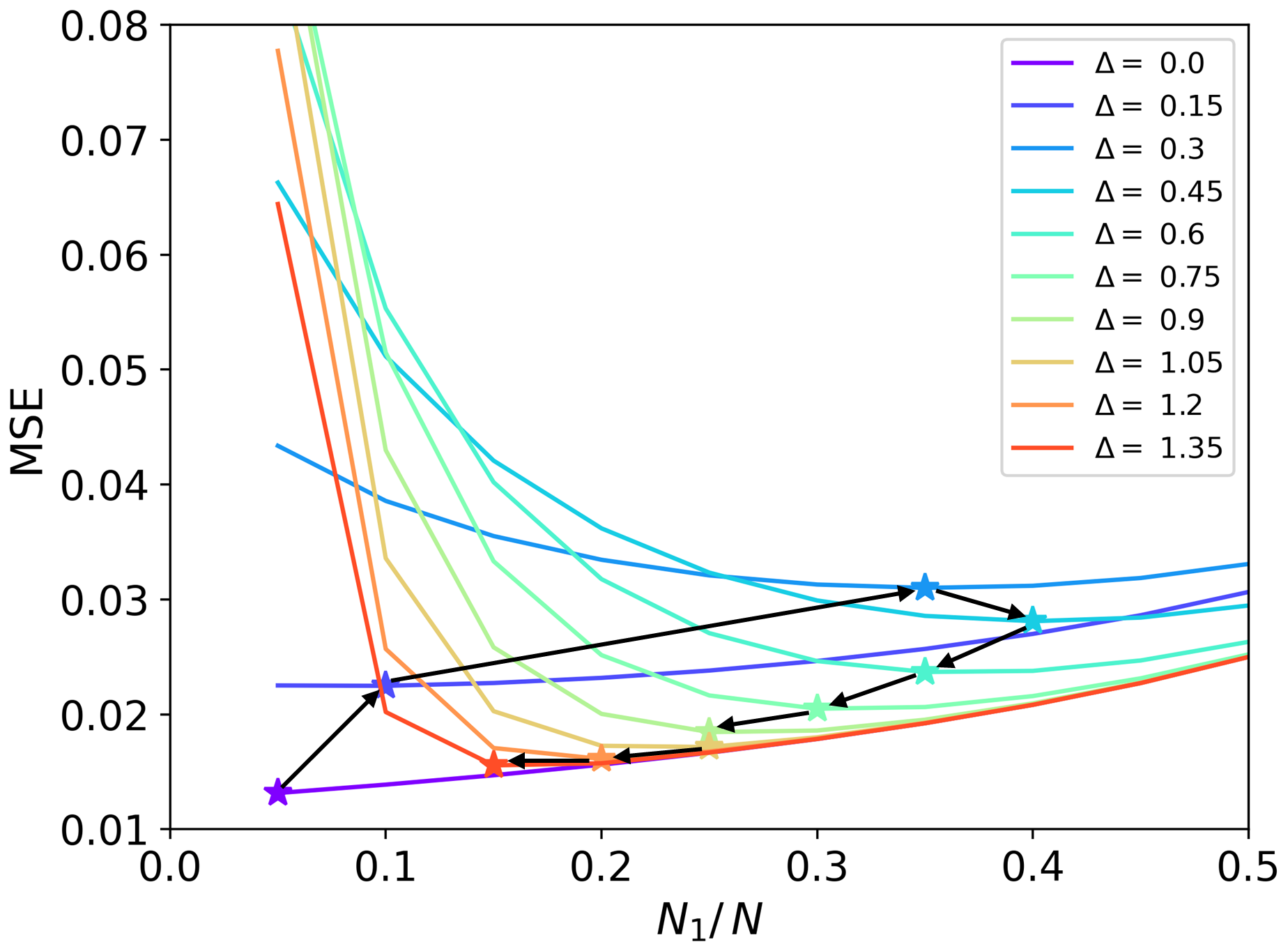}}
\caption{The MSE values as function of the split-ratio for two Gaussians with different mean-gaps $\Delta=\mu_1-\mu_2$ and $\sigma^2=0.25$. Stars mark values in which the minimum MSE is achieved. As expected from \cref{prop:split_ratio}, the optimal split-ratio is always smaller than $N/2$. The black arrows show the trend of the optimum as $\Delta$ increases. Notably, the optimal split-ratio increases from $0$ to $0.4$ and then decreases back to $0$, with alignment to the $\SNR$ claims.
}
\label{fig:MSE as func of r}
\end{center}
\end{figure}

\begin{figure}[t]
\begin{center}
\centerline{\includegraphics[width=0.9\columnwidth]{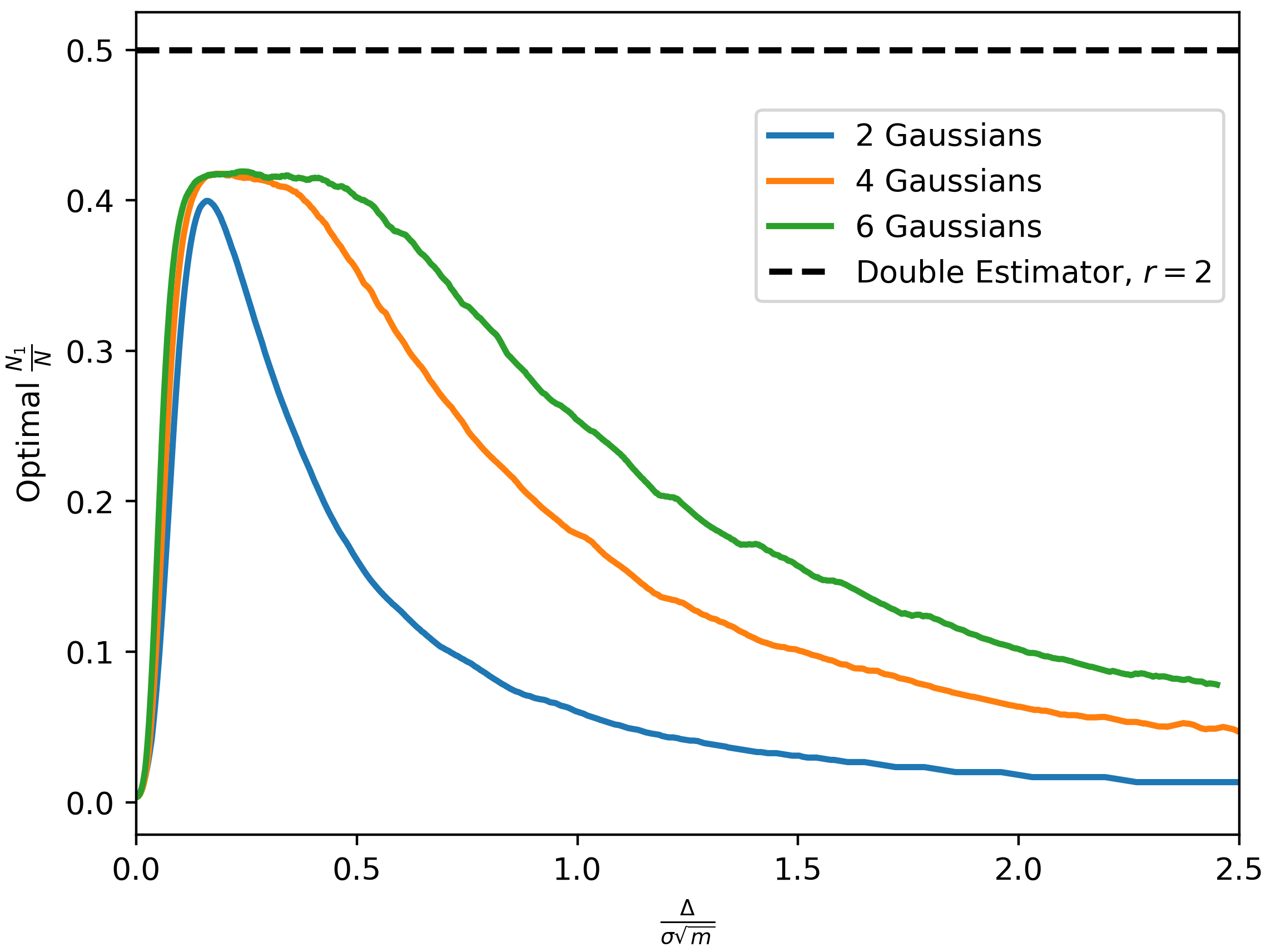}}
\caption{
Numerical calculation of the optimal sample split-ratio as function of normalized gap for 2, 4 and 6 Gaussians with means uniformly spread over the interval $\Delta\!=\!\mu_{max}\!-\!\mu_{min}$.}
\label{fig:optimal r as func of delta}
\end{center}
\end{figure}

\Cref{prop:split_ratio} yields a rather intuitive, yet surprising result. From the first claim, we learn that when $X_1$ and $X_2$ are easily distinguishable (high $\SNR$), samples should not be invested in index-estimation, but rather allocated to the mean-estimation. Moreover, the second claim establishes the same conclusion for the case where $X_1$ and $X_2$ are indiscernible (low $\SNR$). Then, $X_1$ and $X_2$ have near-identical means, and samples are better used for reducing the variance of any of them. The most remarkable claim is the last one; it implies that that the optimal split ratio is \textit{always} smaller than half. In turn, when $N$ is large enough, this implies that there exists $K>2$ such that the MSE of EE is strictly lower than the one of DE.
To further demonstrate our claim for intermediate values of the $\SNR$, \cref{fig:MSE as func of r} shows the MSE as a function of $N_1$ for different values of $\Delta=\mu_1-\mu_2$ (and fixed variance $\sigma^2=0.25$). 

%

In \cref{fig:optimal r as func of delta} we plot the value of $N_1$ that minimizes the MSE as a function of the normalized-distance of the means, $\frac{\Delta}{\sigma\sqrt{m}}$, for a different number of random variables $m \in \cbrac{2,4,6}$. 
The means are evenly spread on the interval $\Delta$, all with $\sigma^2\!=\!0.25$.
\cref{fig:optimal r as func of delta} further demonstrates that using $N_1<\frac{N}{2}$ can reduce the MSE in a large range of scenarios. By the proxy-identity, this is equivalent to using ensemble sizes of $K>2$. 
Therefore, we expect that in practice, EE will outperform DE in terms of the minimal MSE (MMSE). 

\section{\algfull}

In \cref{subsec: ensemble estimator}, we presented the ensemble estimator and its benefits. Namely, ensembles allow unevenly splitting samples between those used to select the index of the maximal component and those used to approximate its mean. 
We also showed that allocating more samples to estimate the mean, rather than the index, reduces the MSE in various scenarios.

\begin{figure*}[!ht]
    \centering
    \begin{subfigure}[b]{0.31\textwidth}
        \includegraphics[width=\textwidth]{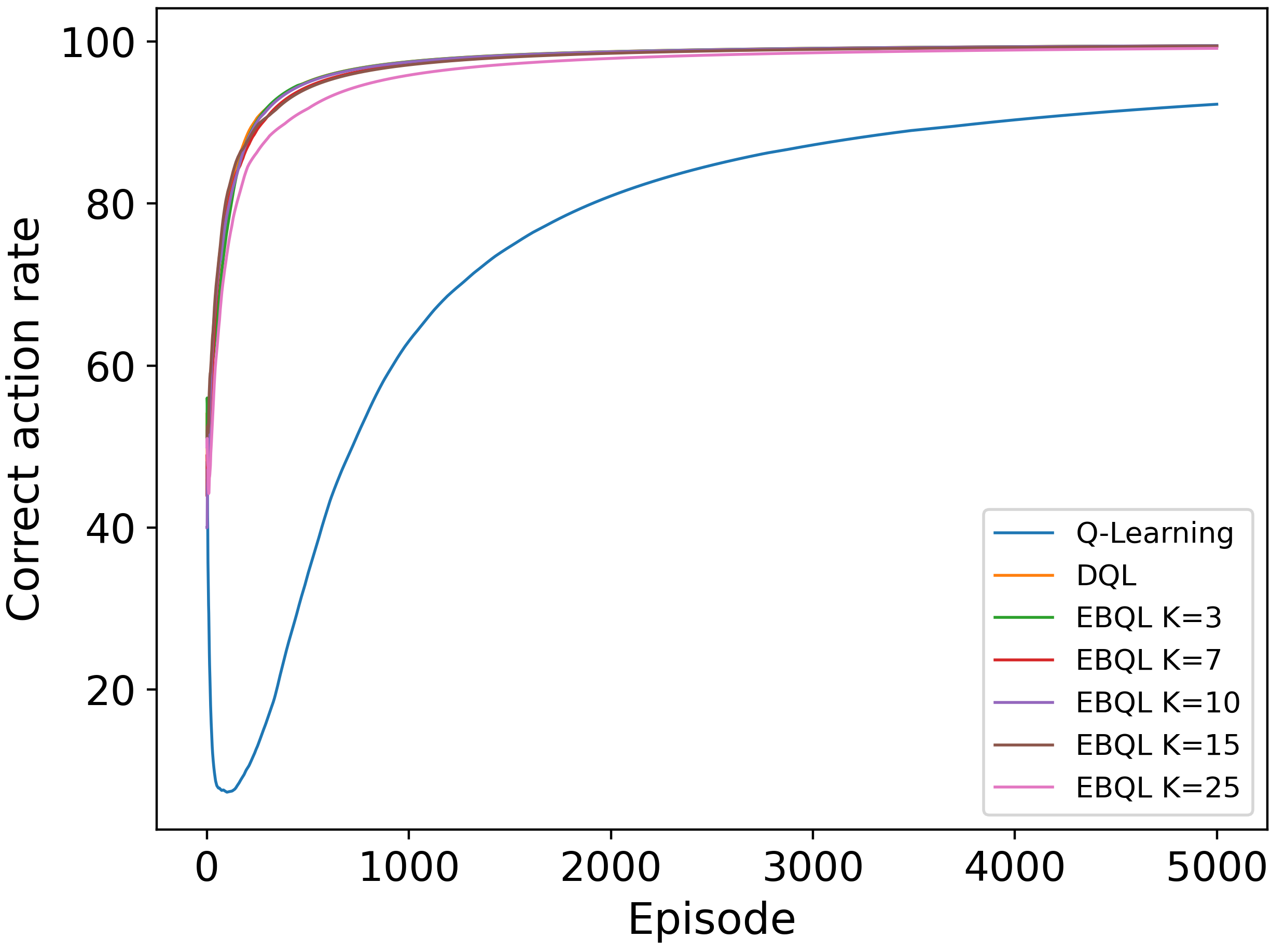}
        \caption{}
        \label{subfig:neg_mu}
    \end{subfigure}%
    ~ 
       \quad
    \begin{subfigure}[b]{0.31\textwidth}
        \includegraphics[width=\textwidth]{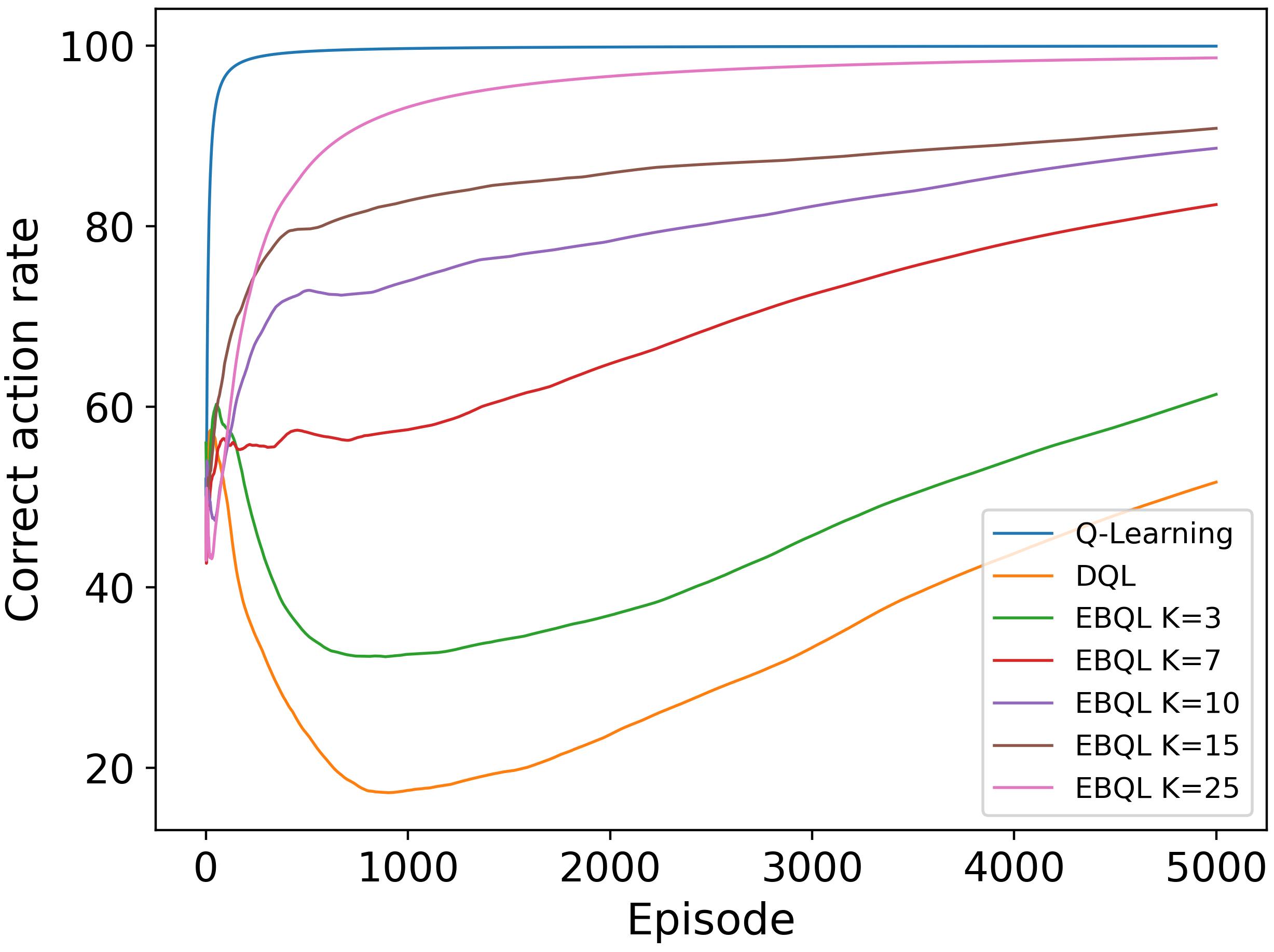}
        \caption{}
        \label{subfig:pos_mu}
    \end{subfigure}
    ~ 
      \quad
    \begin{subfigure}[b]{0.31\textwidth}
        \includegraphics[width=\textwidth]{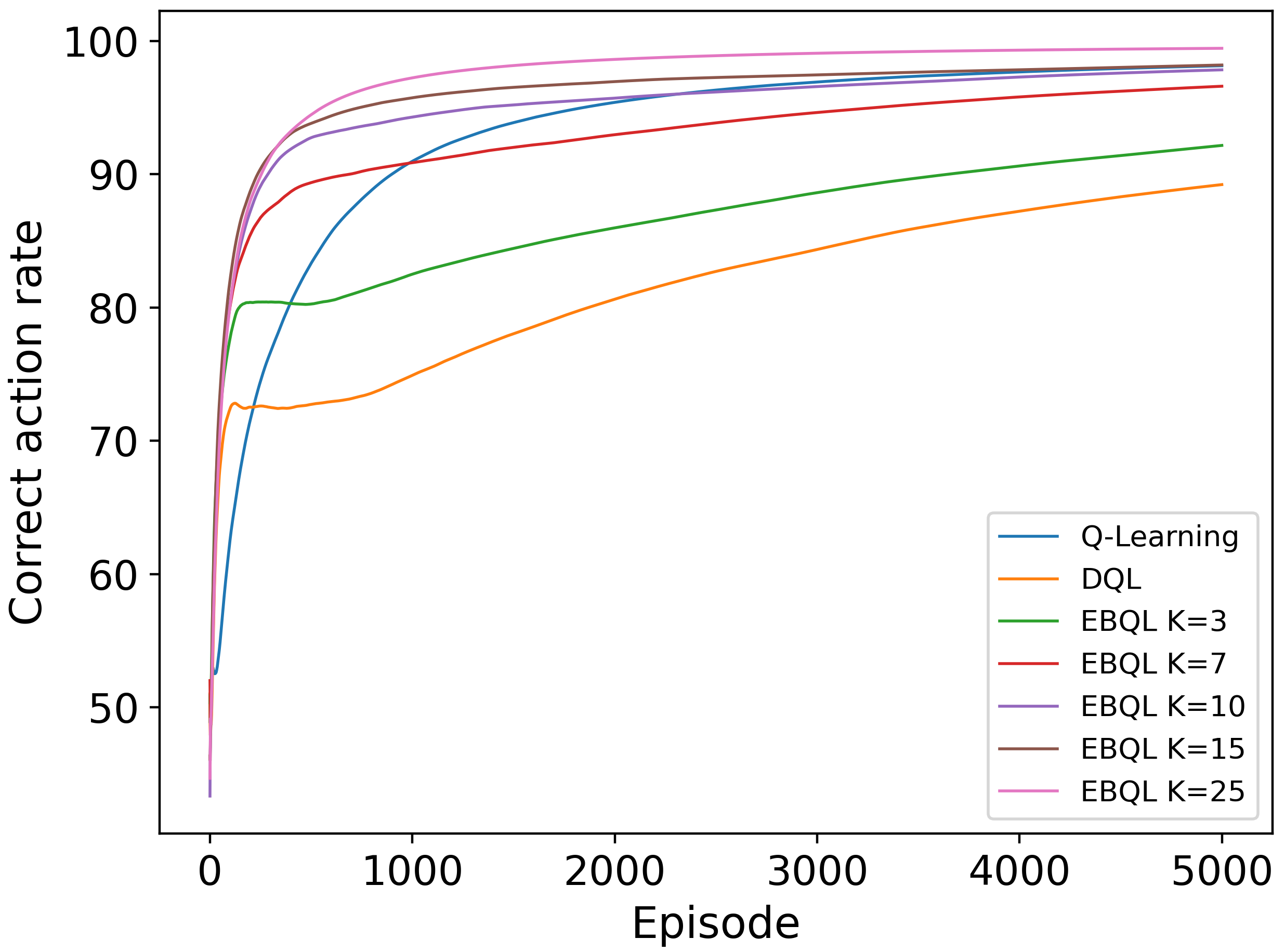}
        \caption{}
        \label{subfig:avg_mu}
    \end{subfigure}
    \caption{Correct-action rate from state $A^i$ as function of episode in the Meta Chain MDP. 
    \cref{subfig:neg_mu} shows the results of a specific chain-MDP where $\mu_i=-0.2<0$, where in \cref{subfig:pos_mu}, $\mu_i=0.2>0$.
    In \cref{subfig:avg_mu}, the average of 6 symmetric $\mu$-values is presented. All results are averaged over 50 random seeds.}
    \label{fig:results_MCMDP}
\end{figure*}

Although the analysis in \cref{subsec: ensemble estimator} focuses on a statistical framework, a similar operation is performed by the QL agent during training when it bootstraps on next-state values. Specifically, recall that $R^{\pi}(s, a)$ is the reward-to-go when starting from state $s$ and action $a$ (see \cref{eq:reward-to-go}). Then, 
denoting $\brac{X_1, \dots, X_m} = \brac{R^{\pi}(s_{t+1}, a_1), \dots, R^{\pi}(s_{t+1}, a_m)}$ and $\brac{\mu_1, \dots, \mu_m} = \brac{Q^{\pi}(s_{t+1}, a_1), \dots, Q^{\pi}(s_{t+1}, a_m)}$, the next-state value used by QL is determined by $\max_a \mu_a$.
As the real $Q$-values are unknown, DQL uses two $Q$ estimators $\brac{\mu_1^{(1)}, \dots, \mu_m^{(1)}} = Q^A$ and $\brac{\mu_1^{(2)}, \dots, \mu_m^{(2)}} = Q^B$ in a way that resembles the DE -- the index is selected as $a^*=\argmax_a \mu_a^{(1)}$ and the value as $\mu^{(2)}_{a^*}$.


 

Our method, \algfull{} (\alg) presented in \cref{alg:EBQL}, can be seen as applying the concepts of EE to the QL bootstrapping phase by using an ensemble of $K$ Q-function estimators. Similar to DQL, when updating the $k^{th}$ ensemble member in \alg{}, we define the next-state action as $\hat{a}^* = \argmax_a Q^k (s_{t+1}, a)$. However, while DQL utilizes two estimators, in \alg{}, the value is obtained by averaging over the remaining ensemble members $Q^{EN \backslash k} = \frac{1}{K-1} \sum_{j \in [K]\backslash k} Q^j(s_{t+1}, \hat{a}^*)$. Notice how when $K=2$ (two ensemble members) we recover the same update scheme as DQL, showing how \alg{} is a natural extension of DQL to ensembles. In practice, the update is done using a learning rate $\alpha_t$, as in DQL (\cref{eq:DQL update}).

\renewcommand{\algorithmiccomment}[1]{#1}
\begin{algorithm}[t]
  \caption{\algfull{} (\alg)}
  \label{alg:EBQL}
\begin{algorithmic}
  \STATE {\bfseries Parameters:} learning-rates: $\cbrac{\alpha_t}_{t\ge1}$
  \STATE {\bfseries Initialize:} Q-ensemble of size $K: \cbrac{Q^i}_{i=1}^K$ , $s_0$
  \FOR{$t= 0, \dots , T$}
    \STATE Choose action $a_t=\argmax_{a}
    \sbrac{\sum_{i=1}^K Q^i(s_t,a)}$ 
    \STATE $a_t = \text{explore}(a_t)$  \hfill\COMMENT{//e.g. $\epsilon$-greedy}
    \STATE $s_{t+1}, r_t \gets \text{env.step}(s_t,a_t)$ 
    \STATE Sample an ensemble member to update: $k_t \!\sim\! \U\!\brac{\sbrac{K}}$
    \STATE $\hat{a}^* = \argmax_a Q^{k_t}(s_{t+1},a)$
    \STATE $Q^{k_t}(s_t,a_t) \gets (1-\alpha_t)Q^{k_t}(s_t,a_t)$\\  $\qquad\qquad\qquad + \alpha_t\brac{r_t + \gamma Q^{EN \setminus k_t}(s_{t+1},\hat{a}^*)}$
  \ENDFOR
  \STATE \textbf{Return} $\cbrac{Q^i}_{i=1}^K$
\end{algorithmic}
\end{algorithm}

\begin{figure}[t]
\begin{center}
\centerline{\includegraphics[width=.975\columnwidth]{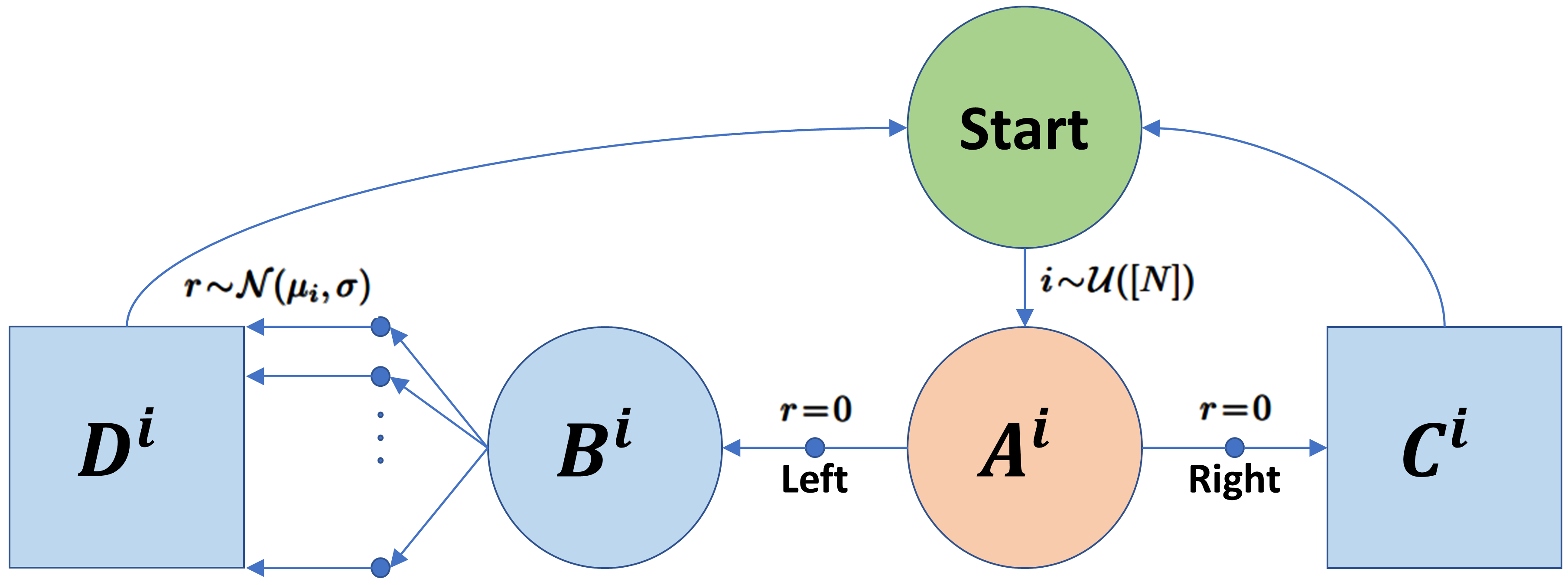}}
\caption{The Meta-Chain-MDP, a set of chain-MDPs differing by the reward at state $D$. Each episode the agent is randomly initialized in one of the MDPs. The variance $\sigma=1$ is identical across the chain MDPs.}
\label{fig:MCMDP}
\end{center}
\end{figure}

\begin{figure}[t]
\begin{center}
\centerline{\includegraphics[width=0.9\columnwidth]{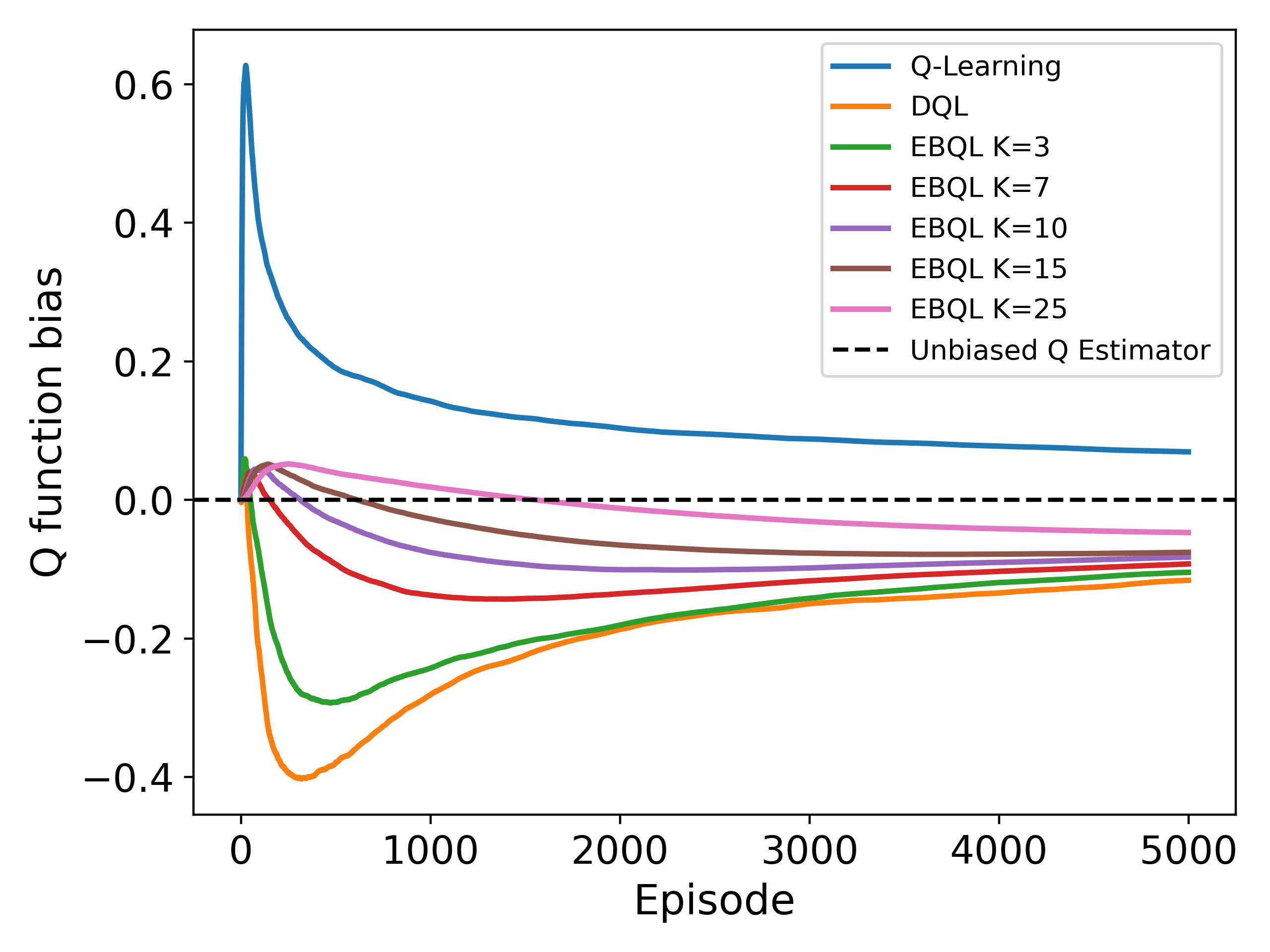}}
\caption{
Estimation bias of the optimal action in state $A^i$ as a function of time in the meta-chain MDP. Due to the zero-initialization of the Q-tables, both DQL and \alg{} are positively biased for a short period of time.
}
\label{fig:Estimation bias}
\end{center}
\end{figure}

\section{Experiments} \label{Experimints}
In this section, we present two main experimental results of \alg{} compared to QL and DQL in both a tabular setting and on the ATARI ALE \citep{bellemare2013arcade} using the deep RL variants. We provide additional details on the methodology in \cref{appendix:implementation}.

\subsection{Tabular Experiment - Meta Chain MDP}\label{exp: tabular}

We begin by analyzing the empirical behavior of the various algorithms on a \emph{Meta Chain MDP} problem (\cref{fig:MCMDP}). 
In this setting, an agent repeatedly interacts with a predetermined set of $N$ chain-MDPs. All chain-MDPs share the same structure, and by the end of the interaction with one MDP, the next MDP is sampled uniformly at random.


For each chain-MDP $i\in [N]$, the agent starts at state $A^i$ and can move either left or right. 
While both immediate rewards are 0, transitioning to state $C^i$ terminates the episode. 
However, from state $B^i$, the agent can perform one of $m$ actions, all transitioning to state $D^i$ and providing the same reward $r (D^i) \sim \mathcal{N}(\mu_i, \sigma^2)$, and the episode is terminated. 
The set of chain MDPs is a symmetrical mixture of chains with positive and negative means. While a negative $\mu_i$ amplifies the sub-optimality of QL due to the over-estimation bootstrapping bias, we observe that positive $\mu_i$ presents the sub-optimality of DQL under-estimation bias.


This simple, yet challenging MDP serves as a playground to examine whether an algorithm balances optimism and pessimism. It also provides motivation for the general case, where we do not know if optimism or pessimism is better.


\textbf{Analysis:} We present the empirical results in \cref{fig:results_MCMDP}. In addition to the results on the meta chain MDP, we also present results on the single-chain setting, for both positive and negative reward means $\mu_i$.

As expected, due to its optimistic nature, QL excels in sub-MDPs where $\mu_i > 0$ (\cref{subfig:pos_mu}). 
The optimistic nature of QL drives the agent towards regions with higher uncertainty (variance). 
As argued above, we observe that in this case, DQL is sub-optimal due to the pessimistic nature of the estimator.
On the other hand, due to its pessimistic nature, and as shown in \citet{hasselt2010double}, DQL excels when $\mu_i < 0$. We replicate their results in \cref{subfig:neg_mu}.

 
Although QL and DQL are capable of exploiting the nature of certain MDPs, this may result in catastrophic failures. On the other hand, and as shown in \cref{subfig:avg_mu}, \alg{} excels in the meta chain MDP, a scenario that averages over the sub-MDPs, showing the robustness of the method to a more general MDP. 
Moreover, we observe that as the size of the ensemble grows, the performance of \alg{} improves.

\textbf{Estimation Bias:} In addition to reporting the performance, we empirically compare the bias of the various learned Q-functions on the meta chain MDP. The results are presented in \cref{fig:Estimation bias}. As expected QL and DQL converge to $0^+$ and $0^-$ respectively. 
Interestingly, the absolute bias of \alg{} decreases and the algorithm becomes less pessimistic as the ensemble size $K$ grows. This explains the results of \cref{fig:results_MCMDP}, where increasing the ensemble size greatly improves the performance in chain MDPs with $\mu_i>0$. To maintain fairness, all Q-tables were set to zero, which can explain the positive bias of \alg{} in the initial episodes.

\begin{figure}[t]
\begin{center}
\centerline{\includegraphics[width=0.9\columnwidth]{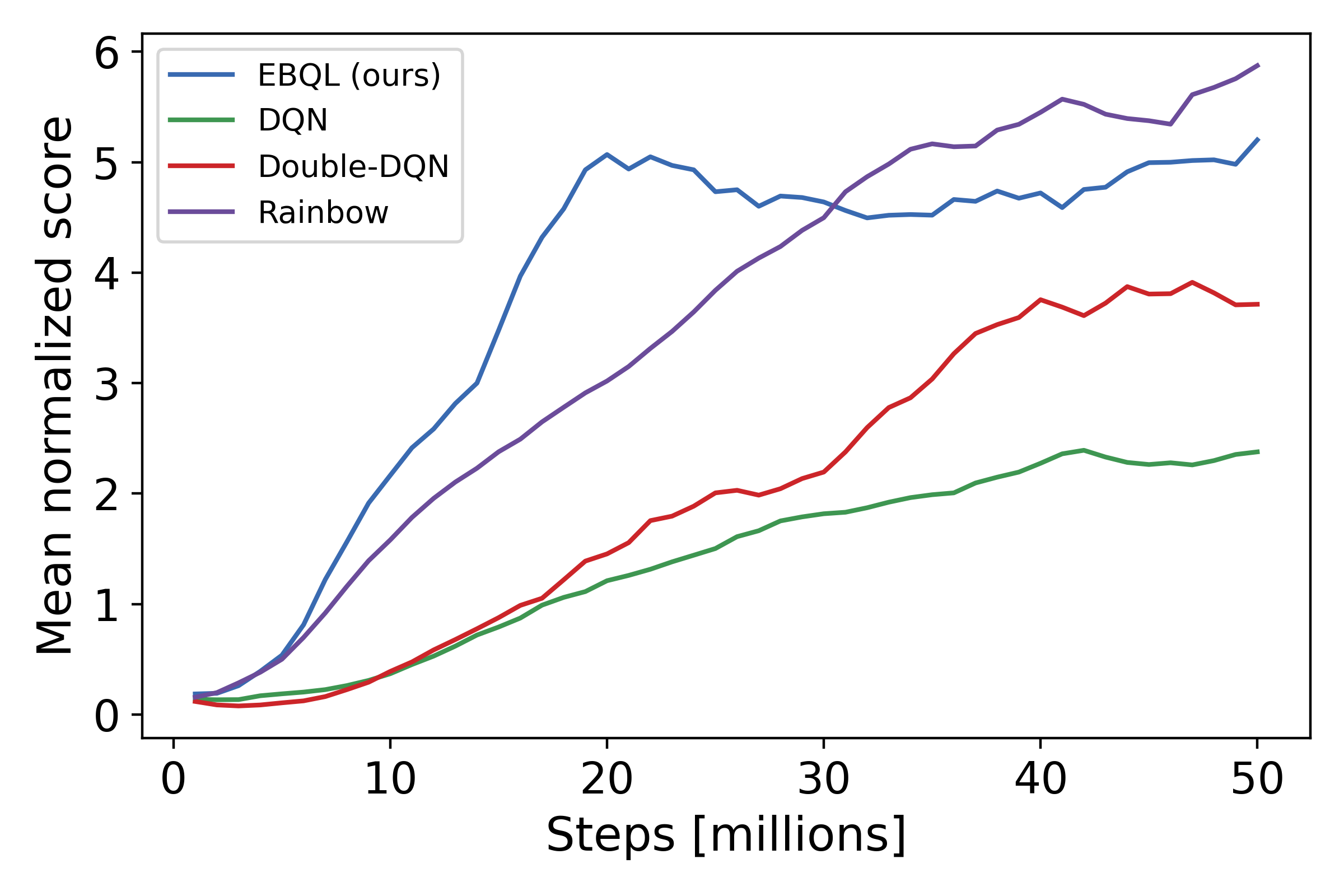}}
\centerline{\includegraphics[width=0.9\columnwidth]{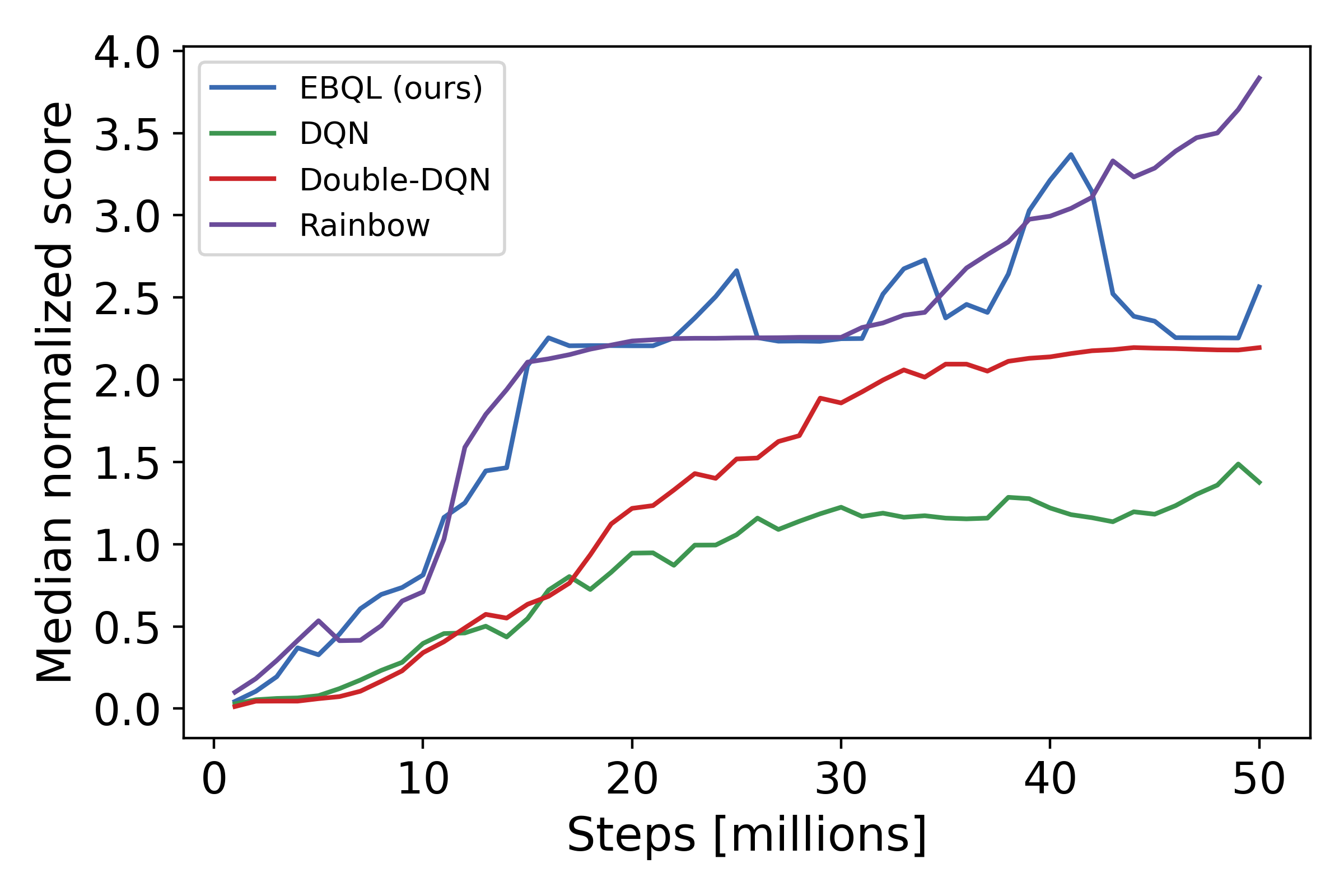}}
\caption{\textbf{Normalized scores:} We present both the mean and median normalized scores over the 11 environments (\cref{tab: atari 50m steps}) with 5 seeds per environment. Curves are smoothed with a moving average over 5 points. In addition to DQN and DDQN, we provide a comparison against Rainbow \citep{hessel2018rainbow}. Rainbow represents the state-of-the-art non-distributed Q-learning-based agent.
}
\label{fig: normalized scores}
\end{center}
\end{figure}

\begin{table*}[t]
    \centering
    \caption{\textbf{ATARI:} Comparison of the DQN, DDQN and \alg{} agents on 11 random ATARI environments. Each algorithm is evaluated for 50m steps over 5 random seeds. We present the average score of the final policies.}
    \label{tab: atari 50m steps}
    \begin{tabular}{c||c|c||c|c||c}
        \backslashbox{\textbf{Environment}}{\textbf{Algorithm}} & \textbf{Human} & \textbf{Random} & \textbf{DQN} & \textbf{DDQN} & \textbf{\alg{}} \\
        \hline\hline
        \textbf{Asterix} & 8503.3 & 210.0 & 4603.2 & 5718.1 & \textbf{22152.5} \\
        \hline
        \textbf{Breakout} & 31.8 & 1.7 & 283.5 & 333.4 & \textbf{406.3} \\
        \hline
        \textbf{CrazyClimber} & 35410.5 & 10780.5 & 93677.1 & 111765.4 & \textbf{127967.5} \\
        \hline
        \textbf{DoubleDunk} & -15.5 & -18.6 & -15.8 & -18.6 & \textbf{-10.1} \\
        \hline
        \textbf{Gopher} & 2321.0 & 257.6 & 3439.4 & 6940.5 & \textbf{21940.0} \\
        \hline
        \textbf{Pong} & 9.3 & -20.7 & 17.4 & 19.9 & \textbf{21.0} \\
        \hline
        \textbf{PrivateEye} & \textbf{69571.3} & 24.9 & 78.1 & 100.0 & 100.0 \\
        \hline
        \textbf{Qbert} & 13455.0 & 163.9 & 5280.2 & 6277.1 & \textbf{14384.4} \\
        \hline
        \textbf{RoadRunner} & 7845.0 & 11.5 & 27671.4 & 40264.9 & \textbf{55927.5} \\
        \hline
        \textbf{Tennis} & -8.9 & -23.8 & -1.2 & -14.5 & \textbf{-1.1} \\
        \hline
        \textbf{VideoPinball} & 17297.6 & 16256.9 & 104146.4 & 230534.3 & \textbf{361205.1}
    \end{tabular}
\end{table*}

\subsection{Atari}\label{experiments: atari}

Here, we evaluate \alg{} in a high dimensional task -- ATARI ALE \citep{bellemare2013arcade}. We present \alg's performance when $K=5$ (ensemble size).
Here, we compare to the DQN \citep{mnih2015human} and DDQN \citep{van2016deep} algorithms, the deep RL variants of Q-learning and Double-Q-Learning respectively.

While there exists a multitude of extensions and practical improvements to DQN, our work focuses on learning the Q-function at a fundamental level. 
Similarly to what is shown in Rainbow \citep{hessel2018rainbow}, most of these methods are orthogonal and will thus only complement our approach.

In order to obtain a fair comparison, we \emph{do not} use the ensemble for improved exploration (as was done in \citealt{osband2016deep,osband2018randomized,lee2020sunrise}). Rather, the action is selected using a standard $\epsilon$-greedy exploration scheme, where the greedy estimator is taken on the average Q-values of the ensemble. All hyper-parameters are identical to the baselines, as reported in \citep{mnih2015human}, including the use of target networks (see \cref{appendix:implementation} for pseudo-codes).

\textbf{Analysis:} We test all methods on 11 randomly-chosen environments. Each algorithm is trained over 50m steps and 5 random seeds. To ensure fairness, for DQN and DDQN we report the results as provided by \citet{dqnzoo2020github}. The complete training curves over all environments are provided in \cref{appendix:experimental results}.
The numerical results are shown in \cref{tab: atari 50m steps}. In addition to the numerical results, we present the mean and median normalized scores in \cref{fig: normalized scores}.

When comparing our method to DQN and DDQN, we observe that, in the environments we tested, \alg{} consistently obtains the \textit{highest} performance. In addition, PrivateEye is the only evaluated environment where \alg{} did not outperform the human baseline. Remarkably, despite \alg{} not containing many improvements (e.g., prioritized experience replay \citep{schaul2015prioritized}, distributional estimators \citep{bellemare2017distributional}, and more \citep{hessel2018rainbow}), it outperforms the more complex Rainbow in the small-sample region ($< 20$ million steps, \cref{fig: normalized scores}). 
Particularly in the Breakout and RoadRunner domains (\cref{appendix:experimental results}), \alg{} also outperforms Rainbow over all 50m training steps.

\section{Summary and Future Work} \label{Summary}

In this work, we presented \algfull, an under-estimating bias-reduction method. We started by analyzing the simple statistical task of estimating the maximal mean of a set of independent random variables. Here, we proved that the ensemble estimator is equivalent to a weighted double-estimator that unevenly splits samples between its two phases (index-selection and mean-estimation). Moreover, we showed both theoretically and empirically that in some scenarios, the optimal partition is not symmetrical. Then, using the ensemble estimator is beneficial.

Based on this analysis, we propose \alg, a way of utilizing the ensemble estimator in RL. Our empirical results in the meta chain MDP show that, compared to QL (single estimator) and DQL (double estimator), \alg{} dramatically reduces the estimation bias while obtaining superior performance. Finally, we evaluated \alg{} on 11 randomly selected ATARI environments. To ensure fairness, we compared DQN, DDQN and \alg{} without any additional improvements \citep{hessel2018rainbow}. In all the evaluated environments, \alg{} exhibits the best performance, in addition to outperforming the human baseline in all but a single environment. Finally, \alg{} performs competitively with Rainbow in the small-samples regime, even though it does not utilize many of its well-known improvements.

Ensembles are becoming prevalent in RL, especially due to their ability to provide uncertainty estimates and improved exploration techniques \citep{osband2018randomized,lee2020sunrise}. Although ensembles are integral to our method, to ensure a fair evaluation, we did not utilize their properties for improved exploration. These results are exciting as they suggest that by utilizing these properties, the performance of \alg{} can be dramatically improved.

We believe that our work leaves room to many interesting extensions. First, recall that 
\alg{} fixes the ensemble size at the beginning of the interaction and then utilizes a single estimator to estimate the action index. However, in \cref{subsec: ensemble estimator}, we showed that the optimal split ratio depends on the distribution of the random variables in question. 
Therefore, one possible extension is to dynamically change the number of ensemble members used for the index-estimation, according to observed properties of the MDP.

Also, when considering the deep RL implementation, we provided a clean comparison of the basic algorithms (QL, DQL and \alg). Notably, and in contrast to previous work on ensembles in RL, we did not use ensembles for improved exploration. However, doing so is non-trivial; for exploration needs, previous work decouples the ensemble members during training, whereas our update does the opposite.



\bibliography{bib.bib}

\begin{thebibliography}{37}
\providecommand{\natexlab}[1]{#1}
\providecommand{\url}[1]{\texttt{#1}}
\expandafter\ifx\csname urlstyle\endcsname\relax
  \providecommand{\doi}[1]{doi: #1}\else
  \providecommand{\doi}{doi: \begingroup \urlstyle{rm}\Url}\fi

\bibitem[Abdulhai et~al.(2003)Abdulhai, Pringle, and
  Karakoulas]{abdulhai2003traffic1}
Abdulhai, B., Pringle, R., and Karakoulas, G.~J.
\newblock Reinforcement learning for true adaptive traffic signal control.
\newblock \emph{Journal of Transportation Engineering}, 129\penalty0
  (3):\penalty0 278--285, 2003.

\bibitem[Andrychowicz et~al.(2020)Andrychowicz, Baker, Chociej, Jozefowicz,
  McGrew, Pachocki, Petron, Plappert, Powell, Ray,
  et~al.]{andrychowicz2020learning}
Andrychowicz, O.~M., Baker, B., Chociej, M., Jozefowicz, R., McGrew, B.,
  Pachocki, J., Petron, A., Plappert, M., Powell, G., Ray, A., et~al.
\newblock Learning dexterous in-hand manipulation.
\newblock \emph{The International Journal of Robotics Research}, 39\penalty0
  (1):\penalty0 3--20, 2020.

\bibitem[Anschel et~al.(2017)Anschel, Baram, and Shimkin]{anschel2017averaged}
Anschel, O., Baram, N., and Shimkin, N.
\newblock Averaged-dqn: Variance reduction and stabilization for deep
  reinforcement learning.
\newblock In \emph{International Conference on Machine Learning}, pp.\
  176--185. PMLR, 2017.

\bibitem[Arulkumaran(2019)]{kaixin}
Arulkumaran, K.
\newblock Rainbow dqn.
\newblock \url{https://https://github.com/Kaixhin/Rainbow}, 2019.

\bibitem[Azar et~al.(2011)Azar, Munos, Ghavamzadeh, and Kappen]{azar2011speedy}
Azar, M.~G., Munos, R., Ghavamzadeh, M., and Kappen, H.
\newblock Speedy q-learning.
\newblock In \emph{Advances in Neural Information Processing Systems}, 2011.

\bibitem[Badia et~al.(2020)Badia, Piot, Kapturowski, Sprechmann, Vitvitskyi,
  Guo, and Blundell]{badia2020agent57}
Badia, A.~P., Piot, B., Kapturowski, S., Sprechmann, P., Vitvitskyi, A., Guo,
  Z.~D., and Blundell, C.
\newblock Agent57: Outperforming the atari human benchmark.
\newblock In \emph{International Conference on Machine Learning}, pp.\
  507--517. PMLR, 2020.

\bibitem[Bellemare et~al.(2013)Bellemare, Naddaf, Veness, and
  Bowling]{bellemare2013arcade}
Bellemare, M.~G., Naddaf, Y., Veness, J., and Bowling, M.
\newblock The arcade learning environment: An evaluation platform for general
  agents.
\newblock \emph{Journal of Artificial Intelligence Research}, 47:\penalty0
  253--279, 2013.

\bibitem[Bellemare et~al.(2017)Bellemare, Dabney, and
  Munos]{bellemare2017distributional}
Bellemare, M.~G., Dabney, W., and Munos, R.
\newblock A distributional perspective on reinforcement learning.
\newblock In \emph{International Conference on Machine Learning}, pp.\
  449--458. PMLR, 2017.

\bibitem[Bellemare et~al.(2020)Bellemare, Candido, Castro, Gong, Machado,
  Moitra, Ponda, and Wang]{bellemare2020autonomous}
Bellemare, M.~G., Candido, S., Castro, P.~S., Gong, J., Machado, M.~C., Moitra,
  S., Ponda, S.~S., and Wang, Z.
\newblock Autonomous navigation of stratospheric balloons using reinforcement
  learning.
\newblock \emph{Nature}, 588\penalty0 (7836):\penalty0 77--82, 2020.

\bibitem[Bellman(1957)]{bellman1957markovian}
Bellman, R.
\newblock A markovian decision process.
\newblock \emph{Journal of mathematics and mechanics}, pp.\  679--684, 1957.

\bibitem[Ernst et~al.(2005)Ernst, Geurts, and Wehenkel]{ernst2005tree}
Ernst, D., Geurts, P., and Wehenkel, L.
\newblock Tree-based batch mode reinforcement learning.
\newblock \emph{Journal of Machine Learning Research}, 6:\penalty0 503--556,
  2005.

\bibitem[Hessel et~al.(2018)Hessel, Modayil, Van~Hasselt, Schaul, Ostrovski,
  Dabney, Horgan, Piot, Azar, and Silver]{hessel2018rainbow}
Hessel, M., Modayil, J., Van~Hasselt, H., Schaul, T., Ostrovski, G., Dabney,
  W., Horgan, D., Piot, B., Azar, M., and Silver, D.
\newblock Rainbow: Combining improvements in deep reinforcement learning.
\newblock In \emph{Proceedings of the AAAI Conference on Artificial
  Intelligence}, volume~32, 2018.

\bibitem[Jin et~al.(2018)Jin, Allen-Zhu, Bubeck, and Jordan]{jin2018q}
Jin, C., Allen-Zhu, Z., Bubeck, S., and Jordan, M.~I.
\newblock Is q-learning provably efficient?
\newblock \emph{arXiv preprint arXiv:1807.03765}, 2018.

\bibitem[Kearns \& Singh(1999)Kearns and Singh]{kearns1999finite}
Kearns, M. and Singh, S.
\newblock Finite-sample convergence rates for q-learning and indirect
  algorithms.
\newblock \emph{Advances in neural information processing systems}, pp.\
  996--1002, 1999.

\bibitem[Kober et~al.(2013)Kober, Bagnell, and Peters]{kober2013reinforcement}
Kober, J., Bagnell, J.~A., and Peters, J.
\newblock Reinforcement learning in robotics: A survey.
\newblock \emph{The International Journal of Robotics Research}, 32\penalty0
  (11):\penalty0 1238--1274, 2013.

\bibitem[Lee et~al.(2020)Lee, Laskin, Srinivas, and Abbeel]{lee2020sunrise}
Lee, K., Laskin, M., Srinivas, A., and Abbeel, P.
\newblock Sunrise: A simple unified framework for ensemble learning in deep
  reinforcement learning.
\newblock \emph{arXiv preprint arXiv:2007.04938}, 2020.

\bibitem[Luong et~al.(2019)Luong, Hoang, Gong, Niyato, Wang, Liang, and
  Kim]{luong2019applications2}
Luong, N.~C., Hoang, D.~T., Gong, S., Niyato, D., Wang, P., Liang, Y.-C., and
  Kim, D.~I.
\newblock Applications of deep reinforcement learning in communications and
  networking: A survey.
\newblock \emph{IEEE Communications Surveys \& Tutorials}, 21\penalty0
  (4):\penalty0 3133--3174, 2019.

\bibitem[Mahmud et~al.(2018)Mahmud, Kaiser, Hussain, and
  Vassanelli]{mahmud2018applications}
Mahmud, M., Kaiser, M.~S., Hussain, A., and Vassanelli, S.
\newblock Applications of deep learning and reinforcement learning to
  biological data.
\newblock \emph{IEEE transactions on neural networks and learning systems},
  29\penalty0 (6):\penalty0 2063--2079, 2018.

\bibitem[Mnih et~al.(2015)Mnih, Kavukcuoglu, Silver, Rusu, Veness, Bellemare,
  Graves, Riedmiller, Fidjeland, Ostrovski, et~al.]{mnih2015human}
Mnih, V., Kavukcuoglu, K., Silver, D., Rusu, A.~A., Veness, J., Bellemare,
  M.~G., Graves, A., Riedmiller, M., Fidjeland, A.~K., Ostrovski, G., et~al.
\newblock Human-level control through deep reinforcement learning.
\newblock \emph{nature}, 518\penalty0 (7540):\penalty0 529--533, 2015.

\bibitem[Osband et~al.(2016)Osband, Blundell, Pritzel, and
  Van~Roy]{osband2016deep}
Osband, I., Blundell, C., Pritzel, A., and Van~Roy, B.
\newblock Deep exploration via bootstrapped dqn.
\newblock In \emph{Advances in neural information processing systems}, pp.\
  4026--4034, 2016.

\bibitem[Osband et~al.(2018)Osband, Aslanides, and
  Cassirer]{osband2018randomized}
Osband, I., Aslanides, J., and Cassirer, A.
\newblock Randomized prior functions for deep reinforcement learning.
\newblock In \emph{Advances in Neural Information Processing Systems}, pp.\
  8617--8629, 2018.

\bibitem[Quan \& Ostrovski(2020)Quan and Ostrovski]{dqnzoo2020github}
Quan, J. and Ostrovski, G.
\newblock {DQN} {Zoo}: Reference implementations of {DQN}-based agents, 2020.
\newblock URL \url{http://github.com/deepmind/dqn_zoo}.

\bibitem[Schaul et~al.(2015)Schaul, Quan, Antonoglou, and
  Silver]{schaul2015prioritized}
Schaul, T., Quan, J., Antonoglou, I., and Silver, D.
\newblock Prioritized experience replay.
\newblock \emph{arXiv preprint arXiv:1511.05952}, 2015.

\bibitem[Smith \& Winkler(2006)Smith and Winkler]{smith2006optimizer}
Smith, J.~E. and Winkler, R.~L.
\newblock The optimizer’s curse: Skepticism and postdecision surprise in
  decision analysis.
\newblock \emph{Management Science}, 52\penalty0 (3):\penalty0 311--322, 2006.

\bibitem[Strehl et~al.(2006)Strehl, Li, Wiewiora, Langford, and
  Littman]{strehl2006pac}
Strehl, A.~L., Li, L., Wiewiora, E., Langford, J., and Littman, M.~L.
\newblock Pac model-free reinforcement learning.
\newblock In \emph{Proceedings of the 23rd international conference on Machine
  learning}, pp.\  881--888, 2006.

\bibitem[Sutton \& Barto(2018)Sutton and Barto]{sutton2018reinforcement}
Sutton, R.~S. and Barto, A.~G.
\newblock \emph{Reinforcement learning: An introduction}.
\newblock MIT press, 2018.

\bibitem[Thaler(2012)]{thaler2012winner}
Thaler, R.~H.
\newblock \emph{The winner's curse: Paradoxes and anomalies of economic life}.
\newblock Simon and Schuster, 2012.

\bibitem[Thrun \& Schwartz(1993)Thrun and Schwartz]{thrun1993issues}
Thrun, S. and Schwartz, A.
\newblock Issues in using function approximation for reinforcement learning.
\newblock In \emph{Proceedings of the 1993 Connectionist Models Summer School
  Hillsdale, NJ. Lawrence Erlbaum}, 1993.

\bibitem[Tsitsiklis(1994)]{tsitsiklis1994asynchronous}
Tsitsiklis, J.~N.
\newblock Asynchronous stochastic approximation and q-learning.
\newblock \emph{Machine learning}, 16\penalty0 (3):\penalty0 185--202, 1994.

\bibitem[Van~Hasselt(2010)]{hasselt2010double}
Van~Hasselt, H.
\newblock Double q-learning.
\newblock \emph{Advances in neural information processing systems},
  23:\penalty0 2613--2621, 2010.

\bibitem[Van~Hasselt et~al.(2016)Van~Hasselt, Guez, and Silver]{van2016deep}
Van~Hasselt, H., Guez, A., and Silver, D.
\newblock Deep reinforcement learning with double q-learning.
\newblock In \emph{Proceedings of the AAAI Conference on Artificial
  Intelligence}, volume~30, 2016.

\bibitem[Van~Hasselt et~al.(2018)Van~Hasselt, Doron, Strub, Hessel, Sonnerat,
  and Modayil]{van2018deep}
Van~Hasselt, H., Doron, Y., Strub, F., Hessel, M., Sonnerat, N., and Modayil,
  J.
\newblock Deep reinforcement learning and the deadly triad.
\newblock \emph{arXiv preprint arXiv:1812.02648}, 2018.

\bibitem[Wang et~al.(2016)Wang, Schaul, Hessel, Hasselt, Lanctot, and
  Freitas]{wang2016dueling}
Wang, Z., Schaul, T., Hessel, M., Hasselt, H., Lanctot, M., and Freitas, N.
\newblock Dueling network architectures for deep reinforcement learning.
\newblock In \emph{International conference on machine learning}, pp.\
  1995--2003. PMLR, 2016.

\bibitem[Watkins \& Dayan(1992)Watkins and Dayan]{watkins1992q}
Watkins, C.~J. and Dayan, P.
\newblock Q-learning.
\newblock \emph{Machine learning}, 8\penalty0 (3-4):\penalty0 279--292, 1992.

\bibitem[Wiering(2000)]{wiering2000multi}
Wiering, M.~A.
\newblock Multi-agent reinforcement learning for traffic light control.
\newblock In \emph{Machine Learning: Proceedings of the Seventeenth
  International Conference (ICML'2000)}, pp.\  1151--1158, 2000.

\bibitem[Xiong et~al.(2018)Xiong, Cao, and Yu]{xiong2018power}
Xiong, R., Cao, J., and Yu, Q.
\newblock Reinforcement learning-based real-time power management for hybrid
  energy storage system in the plug-in hybrid electric vehicle.
\newblock \emph{Applied energy}, 211:\penalty0 538--548, 2018.

\bibitem[Zhang et~al.(2017)Zhang, Pan, and Kochenderfer]{zhang2017weighted}
Zhang, Z., Pan, Z., and Kochenderfer, M.~J.
\newblock Weighted double q-learning.
\newblock In \emph{IJCAI}, pp.\  3455--3461, 2017.

\end{thebibliography}
\bibliographystyle{./icml_formating/icml2021.bst}

\appendix
\onecolumn
\section{Proof of Lemma~\ref{lemma:EE underestimation}} \label{proof:lemma1}
Recall the setting of EE defined as follows:
Let $X = \brac{X_1, \dots ,X_m}$ be a vector of independent random variables and let $\cbrac{\hat{\mu}^{(k)}=\brac{\hat{\mu}_1^{(k)}, \dots, \hat{\mu}_m^{(k)}}}_{k=1}^K$ be $K$ unbiased, independent, estimators of $\dsE[X]$. In our algorithm and analysis, we set $\hat{\mu}^{(j)}_a$ to be the empirical means of the $j^{th}$ subset of samples from the distribution of $X_a$. Also, let $\tilde{k}\in[K]$ be some index. Then, the ensemble estimator is defined as $\hat{\mu}^*_{\EE} = \frac{1}{K-1}\sum_{j \in [K] \backslash \tilde{k}}\hat{\mu}_{\hat{a}^*}^{(j)}$, where $\hat{a}^* = \argmax_a \hat{\mu}_a^{(\tilde{k})}$. 
\SEbias*
\begin{proof}
Since all sets are equally-distributed, assume w.l.o.g. that EE performs its first stage based on the $\tilde{k}^{th}$ subset. Moreover, recall that the estimators $\cbrac{\hat{\mu}^{(k)}}_{k=1}^K$ are independent and unbiased. Therefore, using the tower property, we get
\begin{align*}
    \dsE\sbrac{\hat{\mu}^*_{\EE}} 
    &= \dsE\sbrac{\frac{1}{K-1}\sum_{j \in [K] \backslash \tilde{k}}\hat{\mu}_{\hat{a}^*}^{(j)}} \\
    &= \dsE\sbrac{\dsE\sbrac{\frac{1}{K-1}\sum_{j \in [K] \backslash \tilde{k}}\hat{\mu}_{a}^{(j)}\bigg\vert \hat{a}^*=a} } \tag{Tower property} \\
    &  = \dsE\sbrac{\dsE\sbrac{\frac{1}{K-1}\sum_{j \in [K] \backslash \tilde{k}}\mu_{a}\bigg\vert \hat{a}^*=a} } \tag{Estimators are independent and unbiased}\\
    &  = \dsE\sbrac{\dsE\sbrac{\mu_{a}\vert \hat{a}^*=a} } \\
    & = \dsE\sbrac{\mu_{\hat{a}^*}},
\end{align*}
which proves the first result of the lemma. Next we prove the inequality. First notice that if $P(\hat{a}^* \in \M)=1$, then $\hat{a}^*\in\argmax_a\dsE[X_a]$ almost-surely; then, 
\begin{align*}
    \dsE\sbrac{\hat{\mu}^*_{\EE}} 
    = \dsE\sbrac{\mu_{\hat{a}^*}}
    = \dsE\sbrac{\mu^*} 
    = \mu^*.
\end{align*}
Otherwise, assume that $P(\hat{a}^* \in \M)<1$. Then, there exists an index $a$ such that $\mu_a<\mu^*$ and $P(\hat{a}^*=a)>0$, which implies that:
\begin{align*}
    \dsE\sbrac{\hat{\mu}^*_{\EE}} 
    &= \sum_{a'=1}^m \mu_{a'} P(\hat{a}^*=a')  \\
    &\le \sum_{a'\ne a} \mu^* P(\hat{a}^*=a') + \mu_{a} P(\hat{a}^*=a) \\
    &< \sum_{a'\ne a} \mu^* P(\hat{a}^*=a') + \mu^* P(\hat{a}^*=a) \\
    &= \sum_{a'=1}^m  P(\hat{a}^*=a') \mu^*
    =\mu^*.
\end{align*}



\end{proof}
\clearpage

\section{Proof of Proposition~\ref{prop:proxy}}
\proxyIdentity*
\begin{proof}
Denote by $\hat{a}^*_{\EE}$ and $\hat{a}^*_{\WDE}$, the output indices from the first stage of EE and W-DE, respectively. Importantly, by definitions of both algorithms, we have that $\hat{a}^*_{\WDE} = \argmax_a \hat{\mu}^{(\tilde{k})}_a=\hat{a}^*_{\EE}$. For brevity, we denote this index by $\hat{a}^*$.

For the second stage,  notice that all subsets are of equal-size, i.e., $|S_{\hat{a}^*}^{(j)}|=N/K$ for all $j\in[K]$. Also, denote the sample sets for second stage of the W-DE by $S_a^{\WDE} = \bigcup_{j \in [K]\backslash \tilde{k}} S_{\hat{a}^*}^{(j)}$, which is of size $\abs*{S_a^{\WDE}}=N(1-1/K)$. Then, the ensemble estimator can be written as 
\begin{align*}
    \hat{\mu}_{\EE}^* 
    & = \frac{1}{K-1}\sum_{j \in [K] \backslash \tilde{k}}\hat{\mu}_{\hat{a}^*}^{(j)} \\
    & = \frac{1}{K-1}\sum_{j \in [K] \backslash \tilde{k}} \frac{1}{|S_{\hat{a}^*}^{(j)}|}\sum_{n=1}^{|S_{\hat{a}^*}^{(j)}|}S^{(j)}_{\hat{a}^*}(n)\\ 
    & = \frac{1}{K-1}\sum_{j \in [K] \backslash \tilde{k}} \frac{1}{N/K}\sum_{n=1}^{N/K}S^{(j)}_{\hat{a}^*}(n)\\ 
    & = \frac{1}{N(K-1)/K}\sum_{j \in [K] \backslash \tilde{k}}\sum_{n=1}^{N/K}S^{(j)}_{\hat{a}^*}(n)\\ 
    & = \frac{1}{\abs*{S_a^{\WDE}}}\sum_{n=1}^{\abs*{S_a^{\WDE}}}S_{\hat{a}^*}^{\WDE}(n) \\
    & =  \hat{\mu}_{\hat{a}^*} \brac{\bigcup_{j \in [K]\backslash \tilde{k}} S_{\hat{a}^*}^{(j)}} \\
    & = \hat{\mu}^*_{\WDE}.
\end{align*}

 \end{proof}

\clearpage
\section{MSE Analysis of the Weighted Double Estimator}
\subsection{Calculating the Bias, Variance and MSE of the Estimator}
\label{Ensemble estimator MSE} 



In this appendix, we analyze the bias, variance and MSE of the W-DE as a function of the number of samples used for index estimation $N_1$. Formally, for some $N\in\mathbb{N}$, we define $S^{(1)}$ and $S^{(2)}$ to be sets of samples from the same distribution as $X$ of sizes $\abs{S^{(1)}}=N_1\in[1,N-1]$ and $\abs{S^{(2)}}=N-N_1$ respectively. We further assume that all samples are mutually independent. We then define the empirical mean estimators as 
\begin{align*}
    \hat{\mu}_a^{(j)} = \frac{1}{\abs*{S^{(k)}}} \sum_{j=1}^{\abs*{S^{(k)}}} S_a^{(k)}(j),\quad \forall k\in\cbrac{1,2},\ a\in\sbrac{m}.
\end{align*}
The estimators are naturally unbiased, namely, $\dsE\sbrac{\hat{\mu}_a^{(j)}}=\mu_a$, and since samples are independent, have variance of $\var\sbrac{\hat{\mu}_a^{(j)}}=\frac{1}{\abs{S^{(j)}}}\sigma_a^2$. Then, the W-DE is calculated via the following two-phases estimation procedure:
\begin{enumerate}
    \item \textbf{Index-estimation} using samples from $S^{(1)}$: $\hat{a}^*\in\argmax_a \hat{\mu}_a^{(1)}$.
    \item \textbf{Mean-estimation} using samples from $S^{(2)}$: $\hat{\mu}^*_{\WDE} = \hat{\mu}_{\hat{a}^*}^{(2)}$.
\end{enumerate}
By the proxy-identity (\Cref{prop:proxy}), this estimator is closely related to the ensemble estimator. Specifically, we can understand how the ensemble size affects the MSE of EE by analyzing the effect of $N_1$ on the MSE of W-DE. 
We now directly calculate the statistics of W-DE. 
\begin{itemize}
    \item \textbf{First moment.} Since the samples from the different components of $X$ are mutually independent and $\hat{\mu}^{(2)}$ is an unbiased estimator, we can write
    \begin{align*}
        \dsE\sbrac{\hat{\mu}^*_{\WDE}} 
        = \dsE\sbrac{\hat{\mu}_{\hat{a}^*}^{(2)}}
        = \dsE\sbrac{\dsE\sbrac{\hat{\mu}_{a}^{(2)}\vert \hat{a}^*=a}}
        = \dsE\sbrac{\dsE\sbrac{\mu_{a}\vert \hat{a}^*=a}}
        = \dsE\sbrac{\mu_{\hat{a}^*}}
        = \sum_{a=1}^m \mu_a P\brac{\hat{a}^*=a}.
    \end{align*}
    \item \textbf{Second moment.} Similar to the first moment, we get
    \begin{align*}
        \dsE\sbrac{\brac{\hat{\mu}^*_{\WDE}}^2} 
        = \dsE\sbrac{\brac{\hat{\mu}_{\hat{a}^*}^{(2)}}^2}
        &= \dsE\sbrac{\dsE\sbrac{\brac{\hat{\mu}_{a}^{(2)}}^2\vert \hat{a}^*=a}} \\
        &= \sum_{a=1}^m\dsE\sbrac{\brac{\hat{\mu}_{a}^{(2)}}^2}P\brac{\hat{a}^*=a}  \\
        &= \sum_{a=1}^m\brac{\var\brac{\hat{\mu}_{a}^{(2)}} + \brac{\dsE\sbrac{\hat{\mu}_{a}^{(2)}}}^2}P\brac{\hat{a}^*=a} \\ 
        &= \sum_{a=1}^m\brac{\frac{\sigma_a^2}{N-N_1} + \mu_a^2}P\brac{\hat{a}^*=a} .
    \end{align*}
\end{itemize}
Next, we calculate the bias, variance and MSE of the estimator. To this end, assume w.l.o.g. that $\mu_1\ge\dots\ge \mu_m$. In particular, it implies that $\mu^*=\mu_1$. 
\begin{itemize}
    \item \textbf{Bias. } 
    \begin{align}
    \bias\brac{\hat{\mu}^*_{\WDE}} 
    = \dsE\sbrac{\hat{\mu}^*_{\WDE}} - \mu^* 
    = \sum_{a=1}^m \mu_a P\brac{\hat{a}^*=a} - \mu_1
    = \sum_{a=1}^m (\mu_a-\mu_1) P\brac{\hat{a}^*=a}. \label{eq: W-DE bias}
    \end{align}
    \item \textbf{Variance. }
    \begin{align}
    \var\brac{\hat{\mu}^*_{\WDE}} 
    &=\dsE\sbrac{\brac{\hat{\mu}^*_{\WDE}}^2}  - \brac{\dsE\sbrac{\hat{\mu}^*_{\WDE}}}^2
    = \sum_{a=1}^m\brac{\frac{\sigma_a^2}{N-N_1} + \mu_a^2}P\brac{\hat{a}^*=a} - \brac{\sum_{a=1}^m \mu_a P\brac{\hat{a}^*=a}}^2. \label{eq: W-DE var}
    \end{align}
\end{itemize}
Finally, we calculate the MSE of the estimator. First recall that for any estimator $\hat\mu$ of $\mu^*$, it holds that 
\begin{align} 
    \MSE\brac{\hat\mu}
    &= \dsE\sbrac{\brac{\hat\mu-\mu^*}^2}\nonumber\\
    &= \dsE\sbrac{\brac{\hat\mu - \dsE\sbrac{\hat\mu} + \dsE\sbrac{\hat\mu}-\mu^*}^2}\nonumber\\
    &= \underbrace{\dsE\sbrac{\brac{\hat\mu - \dsE\sbrac{\hat\mu}}^2}}_{=\var(\hat\mu)} + 2\underbrace{\dsE\sbrac{\hat\mu - \dsE\sbrac{\hat\mu}}}_{=0}\brac{\dsE\sbrac{\hat\mu}-\mu^*} + \underbrace{\brac{\dsE\sbrac{\hat\mu}-\mu^*}^2}_{=\bias(\hat\mu)^2} \nonumber\\
    & = \var(\hat\mu) + \bias(\hat\mu)^2 \label{eq:general MSE}
\end{align}
Specifically, for W-DE, we get
\begin{align}
    \bias\brac{\hat{\mu}^*_{\WDE}}^2
     &= \brac{\sum_{a=1}^m (\mu_a-\mu_1) P\brac{\hat{a}^*=a}}^2 \nonumber\\
     & = \brac{\sum_{a=1}^m \mu_aP\brac{\hat{a}^*=a}}^2 -2\sum_{a=1}^m \mu_a\mu_1 P\brac{\hat{a}^*=a} + \mu_1^2 \nonumber\\
     & = \brac{\sum_{a=1}^m \mu_aP\brac{\hat{a}^*=a}}^2 + \sum_{a=1}^m (\mu_1^2 -2\mu_a\mu_1) P\brac{\hat{a}^*=a} \label{eq:W-DE bias}
\end{align}
and combining \cref{eq: W-DE var}, \cref{eq:general MSE} and \cref{eq:W-DE bias}, we get
\begin{align}
    \MSE\brac{\hat{\mu}^*_{\WDE}} 
    &= \var\brac{\hat{\mu}^*_{\WDE}} + \bias\brac{\hat{\mu}^*_{\WDE}}^2 \nonumber\\
    &= \sum_{a=1}^m\brac{\frac{\sigma_a^2}{N-N_1} + \mu_a^2}P\brac{\hat{a}^*=a} + \sum_{a=1}^m (\mu_1^2 -2\mu_a\mu_1) P\brac{\hat{a}^*=a} \nonumber\\
    & = \sum_{a=1}^m\brac{\frac{\sigma_a^2}{N-N_1} + (\mu_1-\mu_a)^2}P\brac{\hat{a}^*=a} \nonumber\\
    & \triangleq \sum_{a=1}^m\brac{\frac{\sigma_a^2}{N-N_1} + \Delta_a^2}P\brac{\hat{a}^*=a}, \label{eq:w-de mse}
\end{align}
where we defined $\Delta_a = \mu_1-\mu_a$.

Finally, we address the values of the probabilities $P\brac{\hat{a}^*=a}$ where $m=2$. 
For this, we assume that all samples are drawn independently from a Gaussian distribution. Then, if $\Phi$ is the cumulative distribution function (cdf) of a standard Gaussian variable, we have that 
\begin{align*} 
    P(\hat{a}^*=1) 
    &= P\brac{\hat{\mu}^{(1)}_1 \ge \hat{\mu}^{(1)}_2} \\
    &= P(Z_{1,2} \geq 0) \tag{For $Z_{a,j}\sim \N\brac{\mu_a-\mu_j, \frac{\sigma_a^2 + \sigma_j^2}{N_1}}$}  \\
    &= 2\cdot \prod_{j=1}^2 \Phi \brac{\frac{\sqrt{N_1}\brac{\mu_1- \mu_j}}{\sqrt{\sigma_j^2 + \sigma_1^2}}} 
\end{align*}
where in the last relation, we substituted the cdf of the Gaussian distribution and used the fact that for $j=a$, we have $\Phi(0)=\frac{1}{2}$. Moreover, from symmetry, a similar relation holds for $P(\hat{a}^*=2)$, and we can write 
\begin{align} 
    P(\hat{a}^*=a) 
    = 2\cdot \prod_{j=1}^2 \Phi \brac{\frac{\sqrt{N_a}\brac{\mu_a- \mu_j}}{\sqrt{\sigma_j^2 + \sigma_a^2}}}. \label{eq:prob max at a}
\end{align}
Finally, plugging \cref{eq:prob max at a} to \cref{eq:w-de mse}, we get
\begin{align} 
        \MSE\brac{\hat{\mu}^*_{\WDE}} 
        = 2\sum_{i=1}^2 \sbrac{\brac{ \frac{\sigma_a^2}{N-N_1}+ \brac{\mu_a - \mu_1}^2} \prod_{j=1}^2\Phi \brac{\frac{\sqrt{N_i}\brac{\mu_i- \mu_j}}{\sqrt{\sigma_i^2 + \sigma_j^2}}}} \label{eq:final_mse}
\end{align}

\subsection{Proof of Proposition~\ref{prop:split_ratio}}
\label{proof:prop2}
\propSplitRatio*
\begin{proof}
For this proof, we analyze the case of $m=2$ where $\sigma_1=\sigma_2=\sigma$ and $\Delta = \mu_1-\mu_2\ge0$. In this specific case, by \cref{eq:final_mse}, we have
\begin{align*} 
    \MSE\brac{\hat\mu^*_{\WDE}}&= \frac{\sigma^2}{N-N_1} \Phi \brac{\frac{\Delta\sqrt{N_1}}{\sqrt{2}\sigma}} + \brac{\frac{\sigma^2}{N-N_1}+\Delta^2}\brac{1-\Phi \brac{\frac{\Delta \sqrt{N_1}}{\sqrt{2}\sigma}}} \nonumber\\
    &= \frac{\sigma^2}{N-N_1}+\Delta^2 -\Delta^2 \Phi\brac{\frac{\Delta\sqrt{N_1}}{\sqrt{2}\sigma}} \nonumber
    \triangleq \MSE(N_1)
\end{align*}
For the analysis, we allow $N_1$ to be continuous in the interval $[1,N-1]$. We then show that there exists a value $\tilde{N_1}$ such that for any $N\ge \tilde{N}_1$, the function $\MSE(N_1)$ is strictly increasing. This implies that even for integer values, any minimizer of the MSE $N_1^*$ is upper-bounded by $N_1^*\le\ceil{\tilde{N}_1}$. To do so, we lower bound the derivative of the MSE, which equals to 
\begin{align} 
    \frac{d\MSE(N_1)}{dN_1} 
    &= \frac{\sigma^2}{\brac{N-N_1}^2} - \frac{\Delta^3}{4\sigma\sqrt{\pi N_1}}e^{-\frac{1}{4}\frac{\Delta^2}{\sigma^2}N_1} \label{eq:mse_derivative} \\
    & = \frac{\sigma^2}{\brac{N-N_1}^2} - \brac{\frac{2\sigma^2}{\sqrt{\pi}N_1^2}}\brac{\frac{1}{4}\brac{\frac{\Delta}{\sigma}}^2 N_1}^{\frac{3}{2}} e^{-\frac{1}{4}\brac{\frac{\Delta}{\sigma}}^2 N_1}\label{eq:mse_derivative exp}
\end{align}
\textbf{Proof of part (1)}
We now focus on lower bounding the derivative in \cref{eq:mse_derivative}. Denoting $b=\frac{\Delta^3}{4\sigma \sqrt{\pi}}$, we have that 
\begin{align*} 
    \frac{d\MSE(N_1)}{dN_1} 
    &\ge \frac{\sigma^2}{\brac{N-N_1}^2} - \frac{\Delta^3}{4\sigma\sqrt{\pi N_1}}\\
    &= \frac{\sigma^2}{\brac{N-N_1}^2} - \frac{b}{\sqrt{ N_1}} \\
    & = \frac{\sigma^2\sqrt{N_1} - b \brac{N^2 - 2N N_1 + N_1^2}}{(N-N_1)^2}\\
    & > \frac{\sigma^2\sqrt{N_1} - b N\brac{N - N_1}}{(N-N_1)^2}. \tag{$N_1<N$ and thus $N_1^2<N_1N$}
\end{align*}
Next, denote $x=\sqrt{N_1}$. Then, the numerator of the derivative is quadratic in $x$ and can be written as 
\begin{align*}
    \sigma^2\sqrt{N_1} - b N\brac{N - N_1} = \sigma^2x - b N\brac{N-x^2} = bNx^2 + \sigma^2x -bN^2.
\end{align*}
as the denominator is always positive, and since $b>0$ and $x=\sqrt{N_1}>0$, the derivative will always be strictly positive for any $N_1>\tilde{N}_1$ such that
\begin{align*}
    \sqrt{\tilde{N}_1} &\ge  \frac{-\sigma^2 + \sqrt{\sigma^4 + 4b^2N^3}}{2 b N} = 
\frac{-\sigma^2 + \sqrt{\sigma^4 + 4\brac{\frac{\Delta^3}{4\sigma \sqrt{\pi}}}^2N^3}}{2N\frac{\Delta^3}{4\sigma \sqrt{\pi}}}
= \sqrt{N}\frac{-1 + \sqrt{1 + \brac{\frac{1}{2\sqrt{\pi}}}^2 \brac{\frac{\Delta}{\sigma / \sqrt{N}}}^6}} {\brac{\frac{\Delta}{\sigma/\sqrt{N}}}^3 \frac{1}{2\sqrt{\pi}}}\\
&\overset{(*)}{=} \sqrt{N}\frac{-1 + \sqrt{1 + c^2 \brac{\frac{\Delta}{\sigma / \sqrt{N}}}^6}} {\brac{\frac{\Delta}{\sigma/\sqrt{N}}}^3 c} = \sqrt{N}\frac{-1 + \sqrt{1 + c^2 \brac{\SNR}^6}} {\brac{\SNR}^3 c}
\end{align*}
where in $(*)$ we defined $c\triangleq\frac{1}{2\sqrt{\pi}}$ and in the last equality, we substituted $\SNR \triangleq\frac{\Delta}{\sigma / \sqrt{N}}$. Notably, when $\SNR\to0$, one can easily verify that $\tilde{N_1}\to0$. Thus, when the SNR is large enough, we will have that $\tilde{N_1}<1$, and $\MSE(N_1)$ will strictly increase in $N_1$ for any integer value in $[1,N-1]$; then, it will be minimized at $N_1^*=1$.



\textbf{Proof of part (2):} 
For this part of the proof, we focus on the second form of the derivative in \cref{eq:mse_derivative exp} and denote $f(x) = x^{3/2}e^{-x}$. Importantly, there exists $x_0$ such that $f(x)< e^{-x/2}$ for any $x\ge x_0$. Since $N_1\ge1$ for any $\Delta,\sigma$ such that $\frac{1}{4}\brac{\frac{\Delta}{\sigma}}^2\ge x_0$, we have that 
\begin{align*} 
    \frac{d\MSE(N_1)}{dN_1} 
    &= \frac{\sigma^2}{\brac{N-N_1}^2} - \brac{\frac{2\sigma^2}{\sqrt{\pi}N_1^2}}\brac{\frac{1}{4}\brac{\frac{\Delta}{\sigma}}^2 N_1}^{\frac{3}{2}} e^{-\frac{1}{4}\brac{\frac{\Delta}{\sigma}}^2 N_1} \\
    &\ge \frac{\sigma^2}{\brac{N-N_1}^2} - \brac{\frac{2\sigma^2}{\sqrt{\pi}N_1^2}}e^{-\frac{1}{8}\brac{\frac{\Delta}{\sigma}}^2 N_1}\\
    &\ge \frac{\sigma^2}{\brac{N-N_1}^2} - \brac{\frac{2\sigma^2}{\sqrt{\pi}N_1^2}}e^{-\frac{1}{8}\brac{\frac{\Delta}{\sigma}}^2}. \tag{$N_1\ge1$}
\end{align*}
One can easily verify that this lower bound on the derivative of the MSE is strictly positive for any $N_1\in(\tilde{N_1},N)$, where 
\begin{align*}
    \tilde{N_1} 
    = \frac{\sqrt{\brac{\frac{2}{\sqrt{\pi}}}e^{-\frac{1}{8}\brac{\frac{\Delta}{\sigma}}^2}}}{1 + \sqrt{\brac{\frac{2}{\sqrt{\pi}}}e^{-\frac{1}{8}\brac{\frac{\Delta}{\sigma}}^2}}}N
    = \frac{\sqrt{\brac{\frac{2}{\sqrt{\pi}}}e^{-\frac{1}{8N}\SNR^2}}}{1 + \sqrt{\brac{\frac{2}{\sqrt{\pi}}}e^{-\frac{1}{8N}\SNR^2}}}N
\end{align*}
Notably, for any fixed $N$ and large enough SNR, we have that $\frac{1}{4}\brac{\frac{\Delta}{\sigma}}^2\ge x_0$ and $\tilde{N}_1<1$. Then, the MSE is strictly increasing in $N_1$ for any integer value $N_1\in[1,N]$ and $N_1^*=1$.





\textbf{Proof of part (3):} We start from the derivative form of \cref{eq:mse_derivative exp}. Notice that the strict maximum of the function $f(x) = x^{3/2}e^{-x}$ over $[0,\infty)$ is achieved at $x^*=1.5$; hence, for any $x\ge0$, we have that $f(x)\le f(x^*)=1.5^{1.5}e^{-1.5}$, which allows us to bound the derivative by 
\begin{align}\label{eq:mse_derivative_lower_bound_N1<N/2}
    \frac{d\MSE(N_1)}{dN_1}  \geq  \frac{\sigma^2}{\brac{N-N_1}^2} - \frac{2\sigma^2}{\sqrt{\pi}N_1^2} 1.5^{1.5}e^{-1.5}
    \triangleq\frac{\sigma^2}{\brac{N-N_1}^2} - \frac{\sigma^2}{N_1^2} d,
\end{align}
where $d=1.5^{1.5}e^{-1.5} \frac{2}{\sqrt{\pi}}$. One can easily verify that the lower bound of \cref{eq:mse_derivative_lower_bound_N1<N/2} equals zero when $N_1 =\frac{\sqrt{d}}{1+\sqrt{d}}N\triangleq \tilde{N_1}$ and is strictly positive for any $N_1\in(\tilde{N_1},N)$.
Notably, $\tilde{N}_1\approx 0.405N$, and therefore, if $N>10$ then $\ceil{\tilde{N_1}}<N/2$, and choosing $N_1 = \ceil{\tilde{N_1}}$ leads to strictly lower MSE than choosing $N_1=N/2$. This hold for any values of $\Delta,\sigma>0$ and proves the third claim of the proposition.
\end{proof}

\clearpage
\section{Single Estimator (SE) is positively biased} \label{proof:se_unbiased}
Let $X = \brac{X_1, \dots , X_m}$ be a vector of $m$ independent random variables with expectations $\mu= \brac{\mu_1, \dots \mu_m}$. We wish to estimate $\max_a\dsE\sbrac{X_a} = \max_a \mu_a$ using samples from same distribution: $S = \cbrac{S_1, \dots S_m}$ where $S_a=\cbrac{S_a(n)}_{n=1}^{\abs{S_a}}$ is a set of i.i.d.\ samples from the same distribution as $X_a$.

The \textit{Single Estimator}- $\hat{\mu}^*_{\SE}$ approximates the maximal expectation via the maximal empirical mean over the entire set of samples; namely, if $\hat{\mu}_a = \frac{1}{|S_a|}\sum_{n=1}^{|S_a|}S_a(n)$, then
\begin{align} \label{eq:SE}
    \hat{\mu}^*_{\SE}(S) \triangleq \max_{a\in[m]}\hat{\mu}_a \approx \max_{a\in[m]}\dsE\sbrac{X_a}.
\end{align}
The single estimator presented in \eqref{eq:SE} overestimates the true maximal expectation. i.e., is positively biased: $\dsE\sbrac{\hat{\mu}^*_{\SE}} - \max_a \dsE X_a \geq 0$.
\begin{proof}
Let $a^*\in\argmax_{a\in[m]}\mu_a$ be an index that maximizes the expected value of $X$ and let $\hat{a}^* = \argmax_{a\in[m]} \hat{\mu}_a$ be an index that maximizes the empirical mean. 
Specifically, by definition, we have that $\hat{\mu}_{a^*} \le \hat{\mu}_{\hat{a}^*}$. In turn, this implies that
\begin{align*} \label{eq:SE inequality}
    \mu_{a^*} - \hat{\mu}_{\hat{a}^*}
    \leq \mu_{a^*} - \hat{\mu}_{a^*}.
\end{align*}
Then, by the monotonicity of the expectation, we get
\begin{align*}
    \dsE \sbrac{\mu_{a^*} - \hat{\mu}_{\hat{a}^*}} \leq \dsE \sbrac{\mu_{a^*} - \hat{\mu}_{a^*}} = 0,
\end{align*}
or $\dsE\sbrac{\mu_{a^*}} \le \dsE \sbrac{\hat{\mu}_{\hat{a}^*}}$, which concludes the proof. The equality holds since we assumed that the samples $S_a(j)$ are drawn from the same distribution as $X_a$ and are therefore unbiased:
\begin{align*}
    \dsE[\hat{\mu}_{a^*}] 
    = \dsE\sbrac{\frac{1}{|S_{a^*}|}\sum_{j=1}^{|S_{a^*}|}S_{a^*}(j)}
    = \mu_{a^*}.
\end{align*}

\end{proof}

\clearpage
\section{Implementation details}
\label{appendix:implementation}
\subsection{Meta-Chain MDP}
\textbf{Environment:} We have tested the algorithms (QL, DQL and \alg{}) over 5000 episodes and averaged the results of 50 random-seeds. The meta-chain MDP was constructed by 6 chain MDPs with $r(D^i)\sim \N(\mu_i, 1)$, where $\cbrac{\mu_i}_{i=1}^6 = \cbrac{-0.6, -0.4, -0.2, 0.2, 0.4, 0.6}$. In all chain MDPs, the number of actions available at state $B^i$ was set to 10.

\textbf{Algorithm:} The learning rate $\alpha_t$ of \alg{} was chosen to be polynomial, similar to the one in \cite{hasselt2010double}; $\alpha(s_t,a_t) = 1/n^i_t(s_t,a_t)^{0.8}$, where $n^i_t(s,a)$ is the number of updates made until time $t$ in the $(s,a)$ entry of the $i$'th estimator. 
We tested \alg{} for different ensemble sizes $K \in \cbrac{3,7,10,15,25}$. The discount factor was set to $\gamma=1$.

\subsection{Deep EBQL}
Our code is based on \citep{kaixin}, and can be found in the supplementary material of our paper.

Denote the TD-error of \alg{} for ensemble member $k$ as $Y_k^{\alg{}}(s,a,r,s') = r + \gamma Q^{EN \setminus k}(s',\hat{a}^*) - Q^{k}(s,a)$.
\begin{algorithm}[H]
  \caption{Deep \algfull{} (Deep-\alg)}
  \label{alg:deep_EBQL}
\begin{algorithmic}
  \STATE {\bfseries Initialize:} Q-ensemble with size $K$, parametrized by random weights: $\cbrac{Q(\cdot,\ \cdot \  ;\theta_i)}_{i=1}^K$, Experience Buffer $\B$
  \FOR{$t = 0, \dots , T$}
    \STATE Choose action $a_t = \argmax_{a}
    \sbrac{\sum_{i=1}^K Q^i(s_t,a ;\theta_i)}$ 
    \STATE $a_t = \text{explore}(a_t)$ \hfill\COMMENT{//e.g. $\epsilon$-greedy}
    \STATE $s_{t+1}, r_t \gets \text{env.step}(s_t,a_t)$
    \STATE $\B \leftarrow (s_t,a_t,r_t,s_{t+1})$
    \STATE Set $k = t \% K$  
    \STATE Define $a^* = \argmax_a Q(s_{t+1},a ; \theta_k)$
    \STATE Take a gradient step to minimize $\mathcal{L} = \dsE_{(s,a^*,r,s')\sim \B}\norm*{Y_k^{\alg{}}(s,a,r,s') - Q(s,a ; \theta_k)}^2$
  \ENDFOR
  \STATE \textbf{Return} $\cbrac{Q^i}_{i=1}^K$
\end{algorithmic}
\end{algorithm}
While in theory, each ensemble member should observe a unique data set,  
we saw that similar to \citet{osband2016deep, osband2018randomized, anschel2017averaged, lee2020sunrise}, \alg{} performs well when \textit{all} ensemble members are updated \textit{every} iteration using the same samples from the experience replay. In addition, we use a shared feature extractor for the entire ensemble, followed by the independent  head for each ensemble member.

\subsection{The Double Q-Learning Algorithm}
\label{DQL_alg}
\begin{algorithm}[H]
  \caption{Double Q-Learning (DQL)}
  \label{alg:DQL}
\begin{algorithmic}
  \STATE {\bfseries Initialize:} Two Q-tables: $Q^A$ and $Q^B$ , $s_0$ 
  \FOR{$t= 1, \dots , T$}
    \STATE Choose action $a_t=\argmax_{a}
    \sbrac{Q^A(s_t,a) + Q^B(s_t,a)}$ 
    \STATE $a_t= \text{explore}(a_t)$\hfill\COMMENT{//e.g. $\epsilon$-greedy}
    \STATE $s_{t+1}, r_t \gets \text{env.step}(s_t,a_t)$
    \IF {$t \% 2 == 0$} 
    \STATE Define $a^* = \argmax_a Q^A(s_{t+1},a)$\hfill\COMMENT{//Update A}
    \STATE $Q^A(s_t,a_t) \gets Q^A(s_t,a_t) + \alpha_t\brac{r_t + \gamma Q^B(s_{t+1},a^*) -Q^A(s_t,a_t)}$
    \ELSE
    \STATE Define $b^* = \argmax_a Q^B(s_{t+1},a)$\hfill\COMMENT{//Update B}
    \STATE $Q^B(s_t,a_t) \gets Q^B(s_t,a_t) + \alpha_t\brac{r_t + \gamma Q^A(s_{t+1},b^*) -Q^B(s_t,a_t)}$
    \ENDIF
  \ENDFOR
  \STATE \textbf{Return} $Q^A$, $Q^B$
\end{algorithmic}
\end{algorithm}

\section{Additional Experimental Results}\label{appendix:experimental results}

Below we provide the per-environment training curves. A somewhat surprising result is that \alg{} is capable of outperforming Rainbow in several domains, although Rainbow combines a multitude of improvements which we did not add on top of \alg{} (distributional, dueling, prioritized experience replay and multi-step learning).

\begin{figure}[H]
    \centering
    \includegraphics[width=0.24\linewidth]{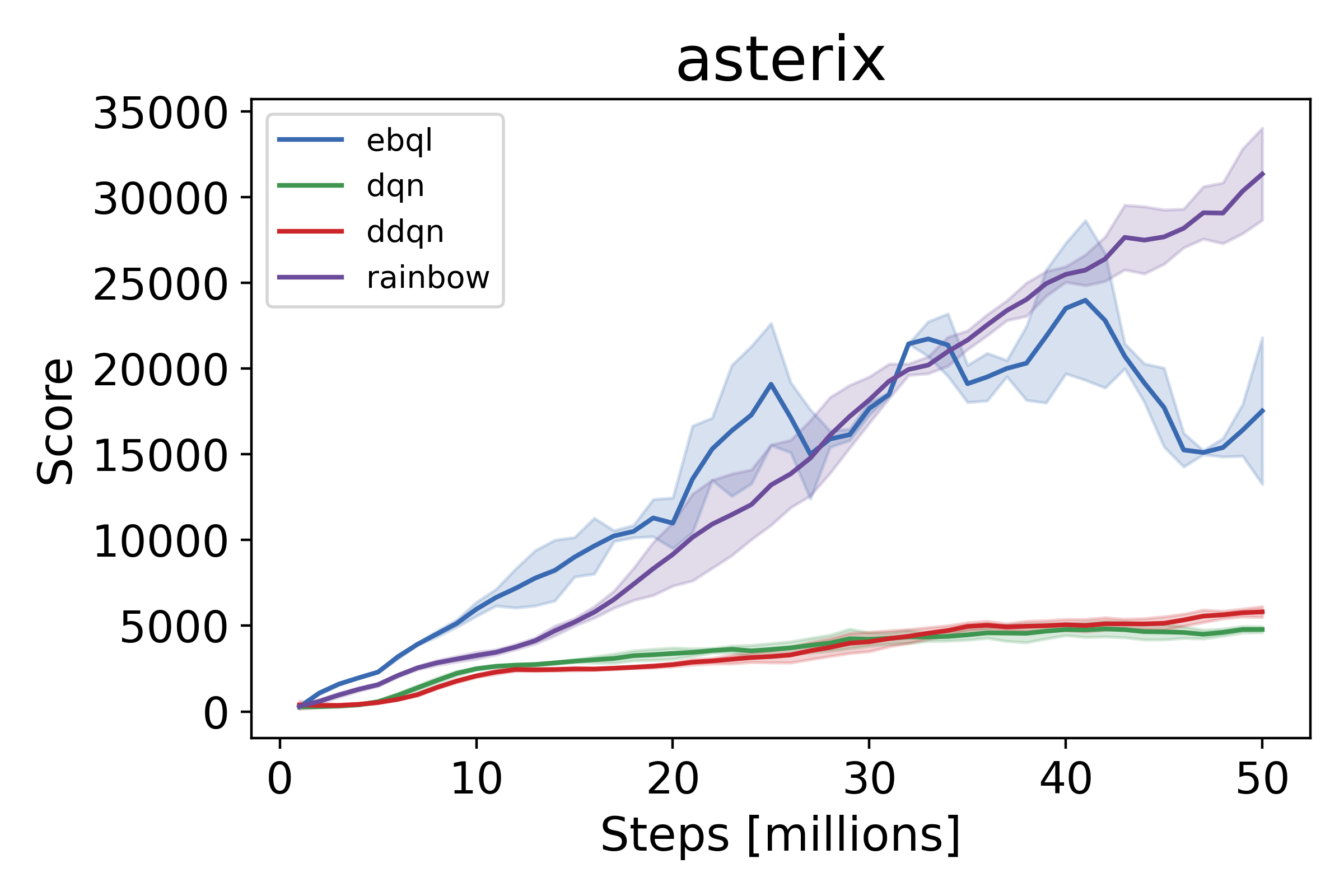}
    \includegraphics[width=0.24\linewidth]{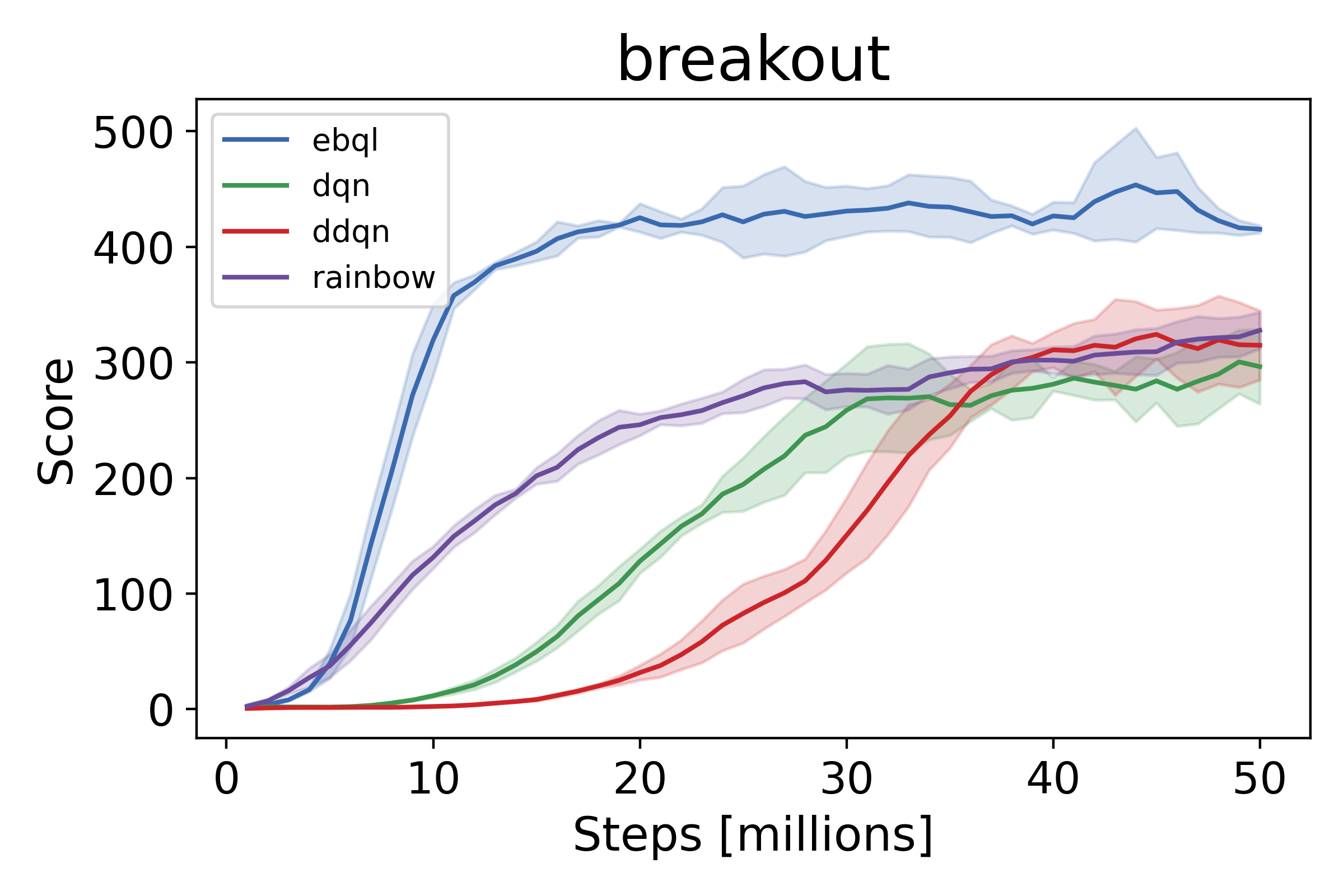}
    \includegraphics[width=0.24\linewidth]{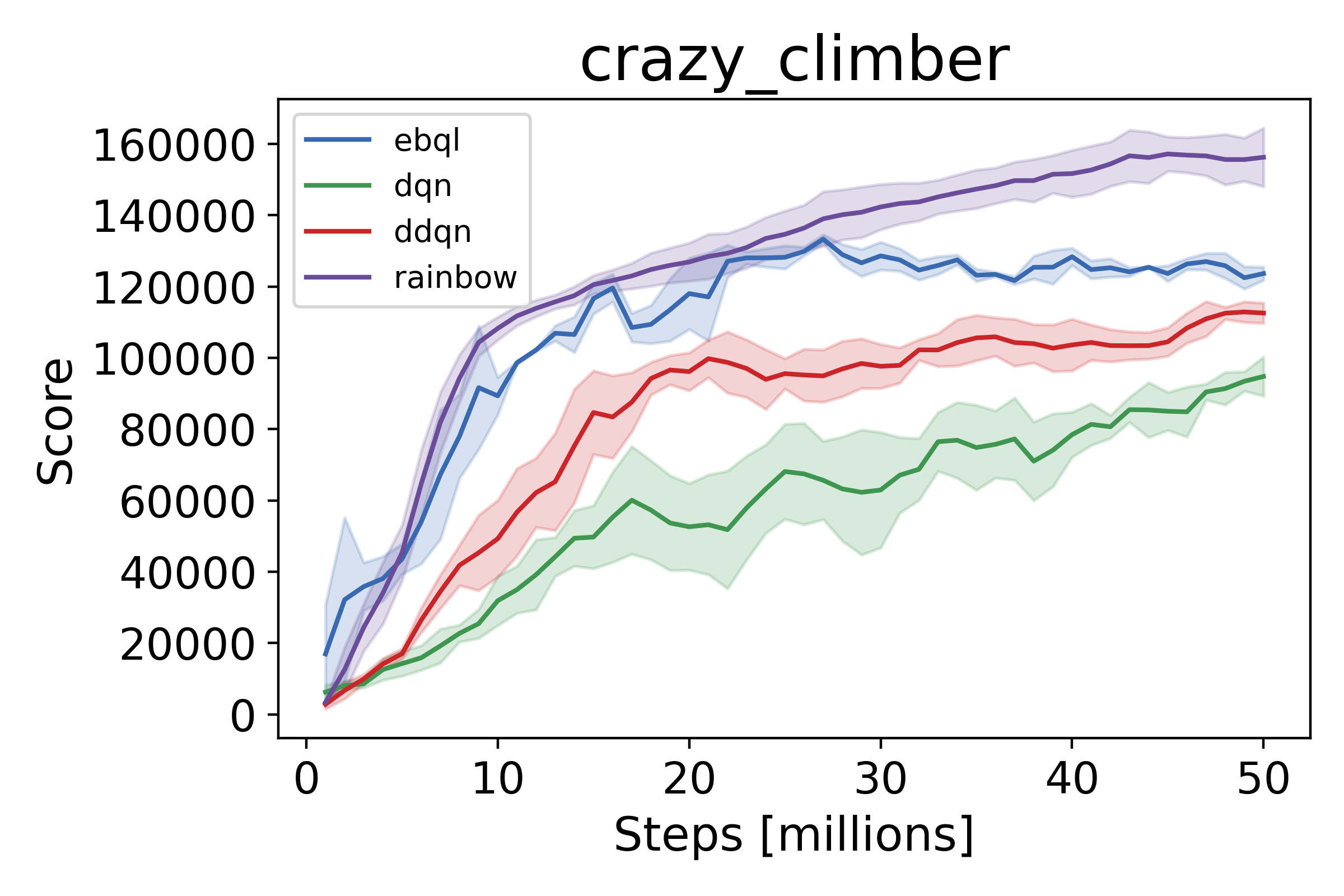}
    \includegraphics[width=0.24\linewidth]{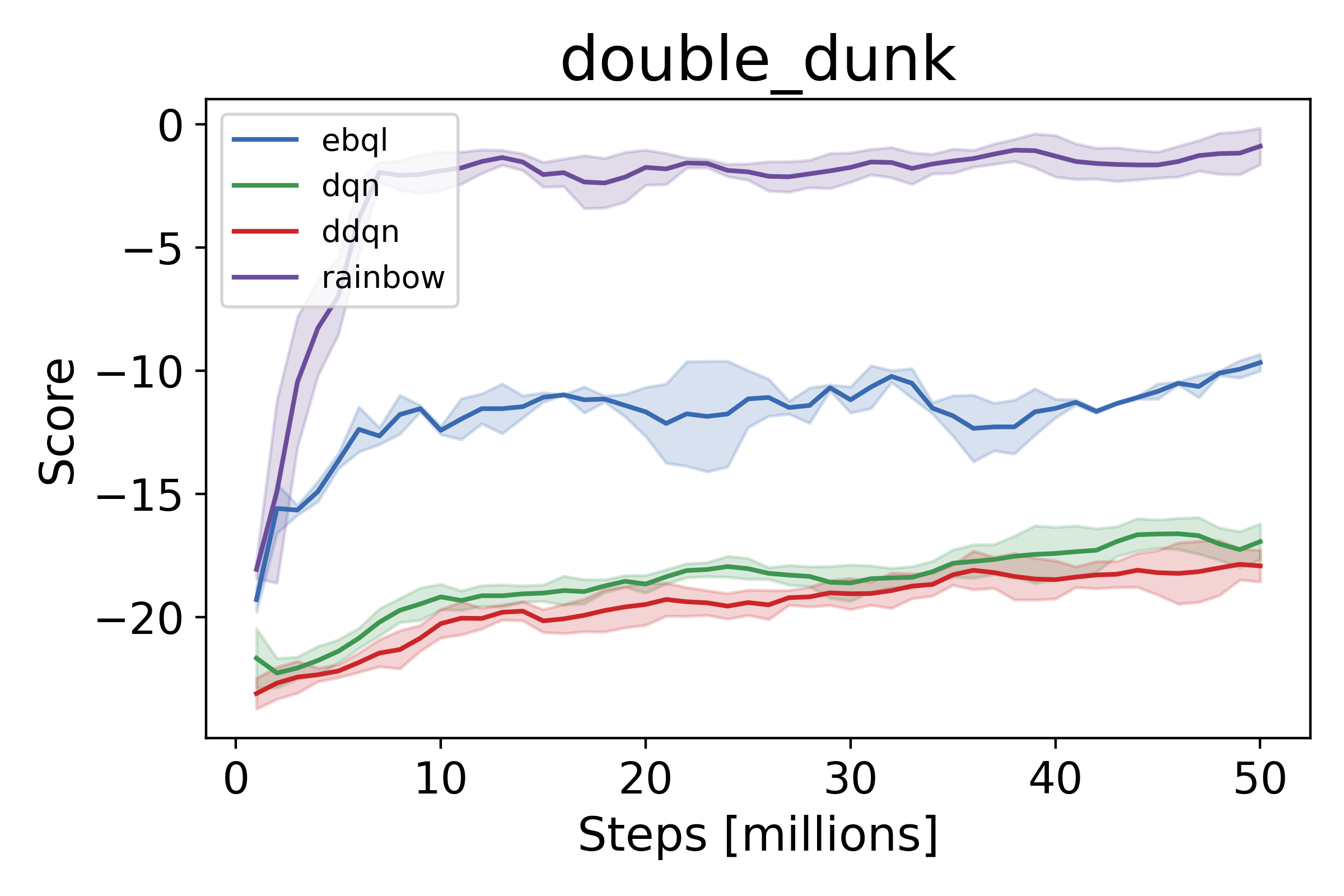}
    \includegraphics[width=0.24\linewidth]{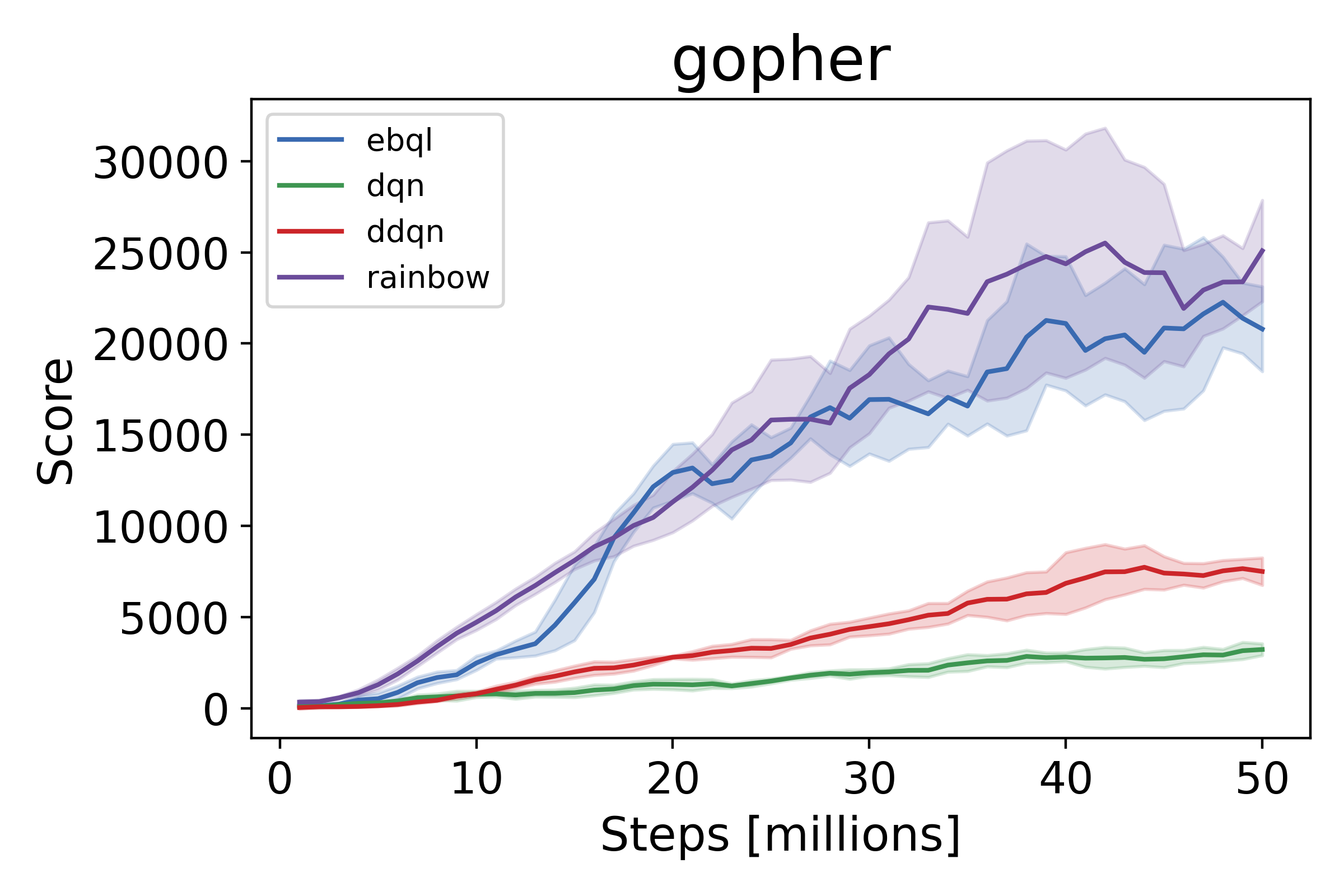}
    \includegraphics[width=0.24\linewidth]{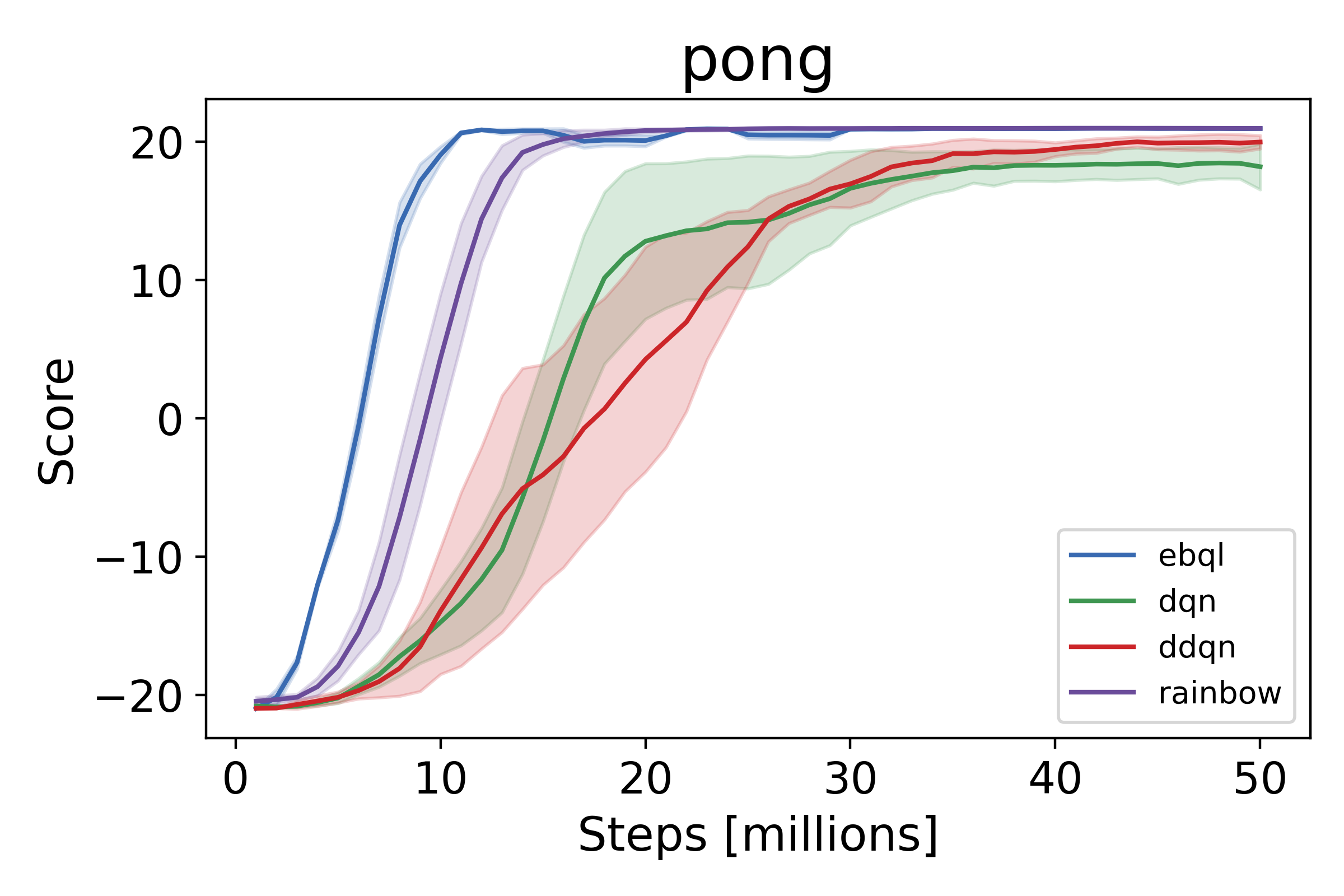}
    \includegraphics[width=0.24\linewidth]{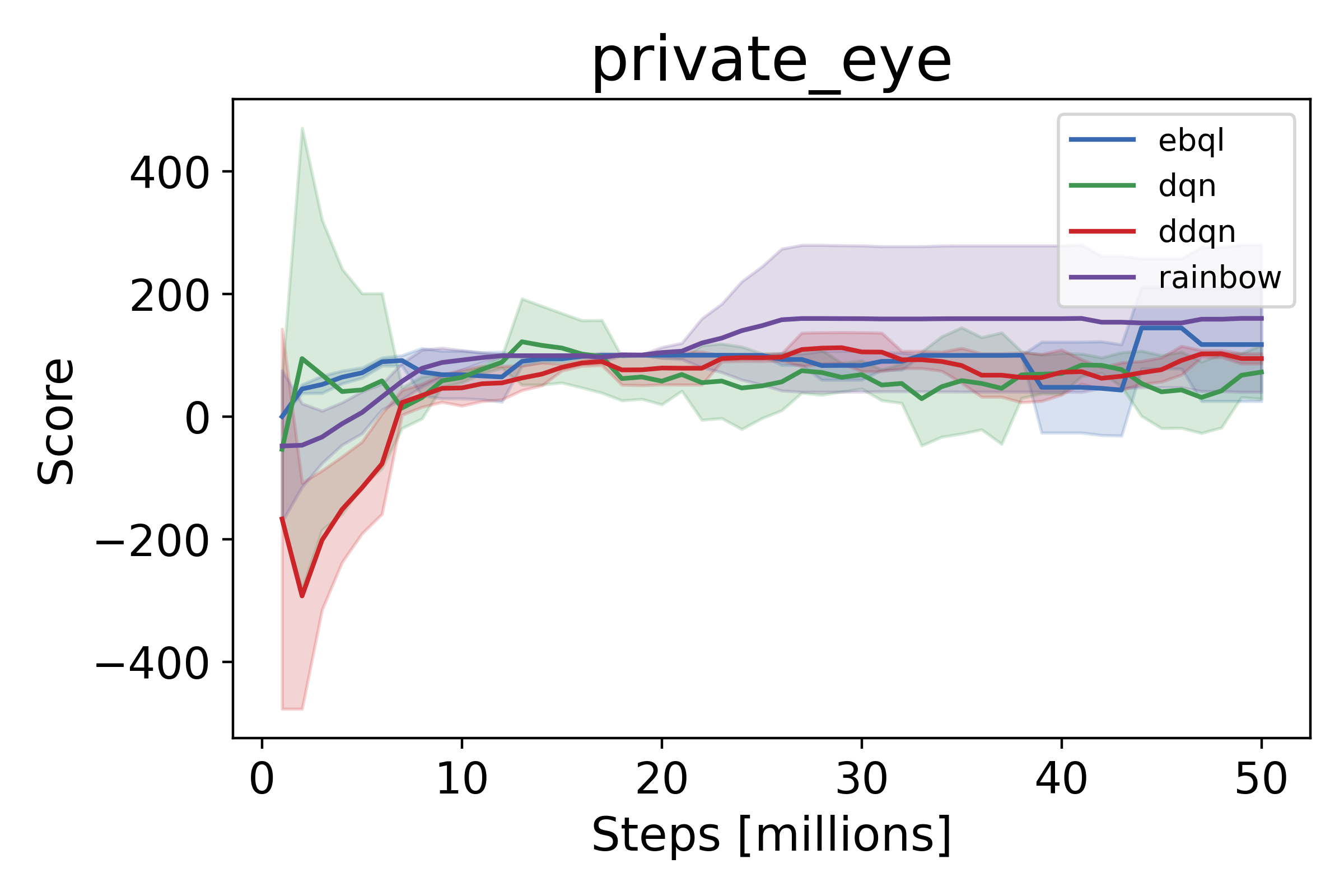}
    \includegraphics[width=0.24\linewidth]{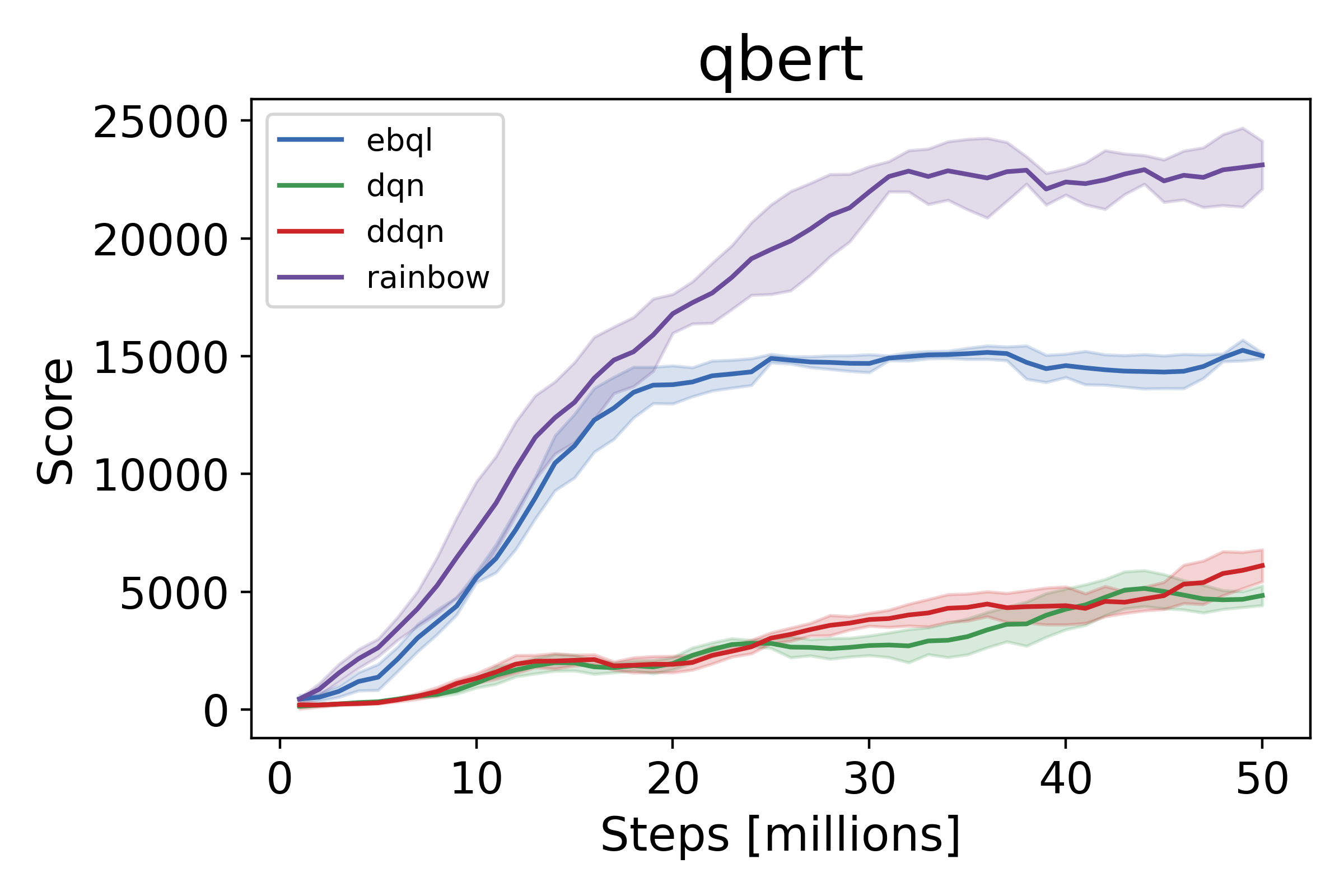}
    \includegraphics[width=0.24\linewidth]{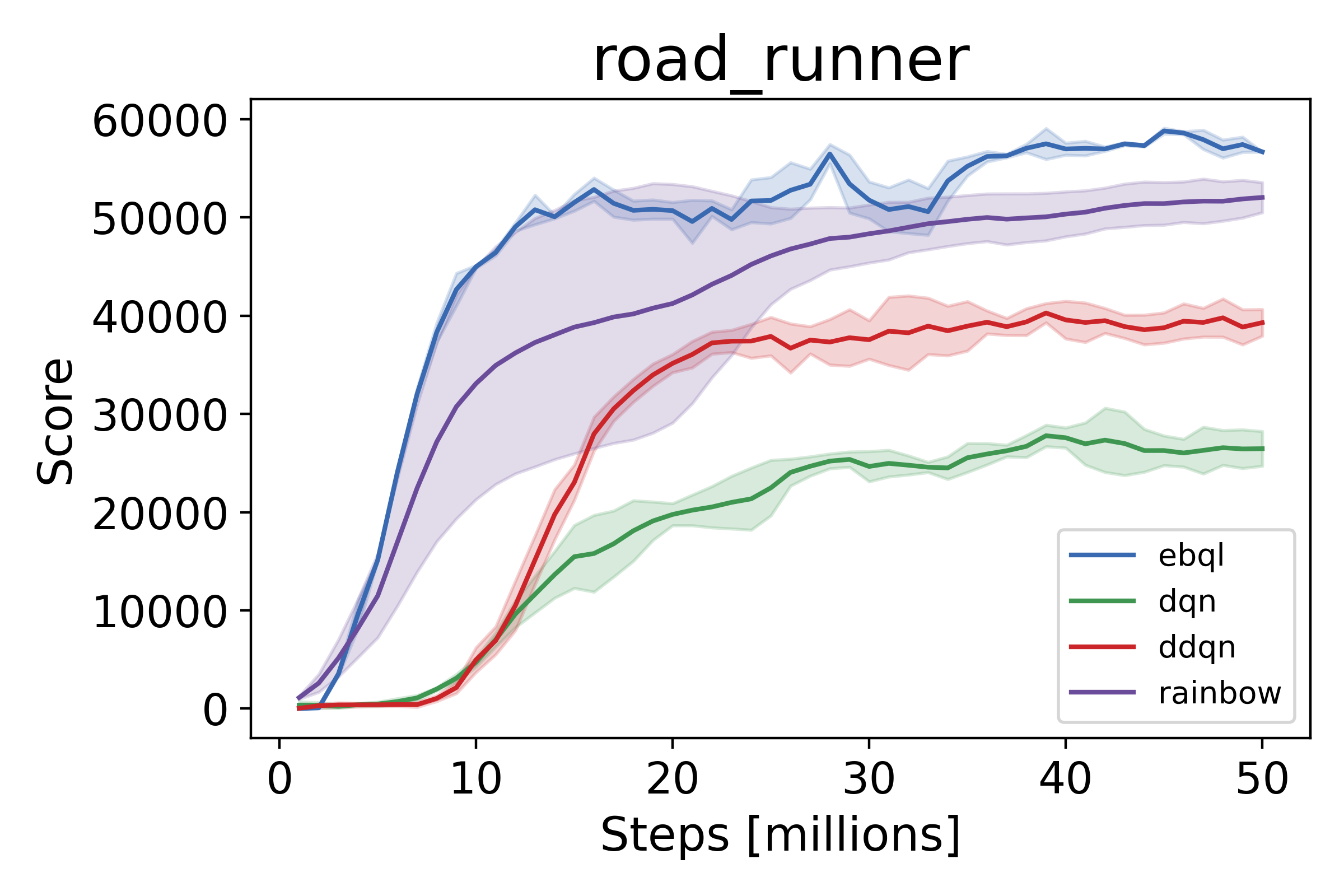}
    \includegraphics[width=0.24\linewidth]{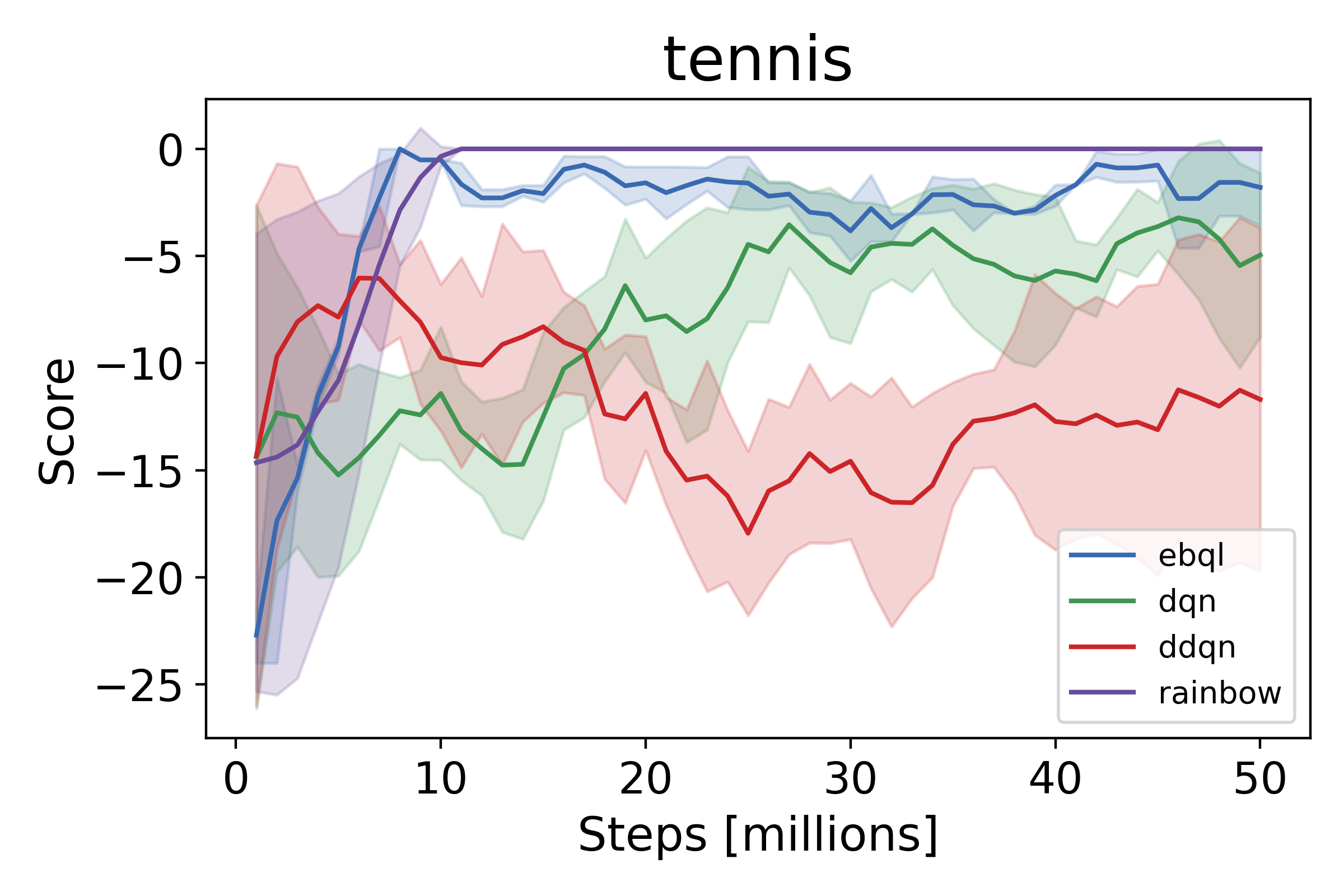}
    \includegraphics[width=0.24\linewidth]{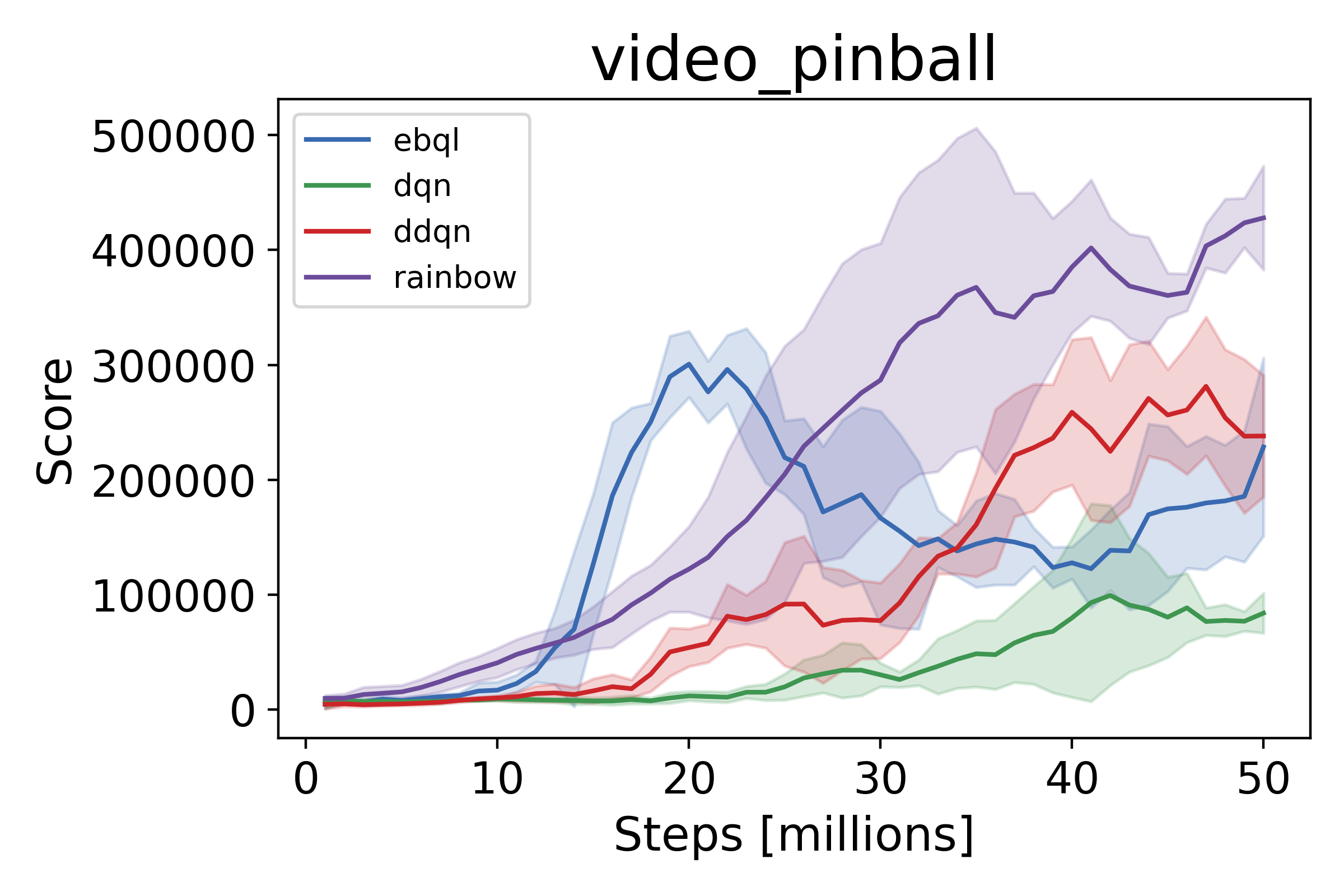}
    \caption{Comparison between DQN, DDQN, Rainbow and \alg{} on each domain over 50m steps averaged across 5 random seeds.}
    \label{fig:all envs atari boom}
\end{figure}

\end{document}